\newtheorem{theorem}{Theorem}
\newtheorem{lemma}[theorem]{Lemma}
\newtheorem{corollary}[theorem]{Corollary}
\newtheorem{definition}[theorem]{Definition}
\newtheorem{remark}[theorem]{Remark}
\newtheorem{fact}[theorem]{Fact}
\newcommand{\EquationName}[1]{\label{eq:#1}}
\newcommand{\LemmaName}[1]{\label{lem:#1}}
\newcommand{\DefinitionName}[1]{\label{def:#1}}
\newcommand{\CorollaryName}[1]{\label{cor:#1}}
\newcommand{\SectionName}[1]{\label{sec:#1}}
\newcommand{\TheoremName}[1]{\label{thm:#1}}
\newcommand{\RemarkName}[1]{\label{rem:#1}}
\newcommand{\FigureName}[1]{\label{fig:#1}}
\newcommand{\Equation}[1]{Eq.\:\eqref{eq:#1}}
\newcommand{\Lemma}[1]{Lemma~\ref{lem:#1}}
\newcommand{\Corollary}[1]{Corollary~\ref{cor:#1}}
\newcommand{\Section}[1]{Section~\ref{sec:#1}}
\newcommand{\Theorem}[1]{Theorem~\ref{thm:#1}}
\newcommand{\Remark}[1]{Remark~\ref{rem:#1}}
\newcommand{\Figure}[1]{Figure~\ref{fig:#1}}
\newcommand{\Eqsub}[1]{\eqref{eq:#1}}
\newenvironment{fminipage}
{
  \begin{Sbox}\begin{minipage}}
    {\end{minipage}\end{Sbox}\fbox{\TheSbox}
}
\newenvironment{algbox}[0]{
  \vskip 0.2in
  \noindent
  \begin{fminipage}{6.3in}
  }{
  \end{fminipage}
  \vskip 0.2in
}
\newcommand\erspud{\textbf{ER-SpUD}\xspace}
\newcommand\erspuddc{\textbf{ER-SpUD(DC)}\xspace}
\newcommand\erspuddctwo{\textbf{ER-SpUD(DCv2)}\xspace}
\newcommand\greedy{\textbf{Greedy}\xspace}
\DeclareMathOperator*{\E}{\mathbb{E}}
\let\Pr\relax
\DeclareMathOperator*{\Pr}{\mathbb{P}}
\let\P\relax
\DeclareMathOperator*{\P}{\mathbb{P}}
\DeclareMathOperator*{\supp}{support}
\DeclareMathOperator*{\polylog}{polylog}
\newcommand{\proofbelow}{3pt}
\newcommand{\afterproof}{\hfill $\blacksquare$ \par \vspace{\proofbelow}}
\newenvironment{proofof}[1]{\noindent\textbf{Proof} \,(of #1).\,}{\afterproof}
\newcommand{\eqdef}{\mathbin{\stackrel{\rm def}{=}}}
\newcommand\Oh{\mathcal{O}}
\newcommand\oh{o}
\newcommand\eps{\varepsilon}
\newcommand\R{\mathbb{R}}
\newcommand\N{\mathbb{N}}
\newcommand{\inprod}[1]{\left\langle #1 \right\rangle}
\newcommand\les\lesssim
\begin{document}

\author{Jaros{\l}aw B{\l}asiok\thanks{Harvard University, Cambridge, MA. \texttt{jblasiok@g.harvard.edu}. Supported by NSF grant IIS-1447471.}
  \and
Jelani Nelson\thanks{Harvard University, Cambridge, MA. \texttt{minilek@seas.harvard.edu}. Supported by NSF grant IIS-1447471 and CAREER award CCF-1350670, ONR grant N00014-14-1-0632 and Young Investigator award N00014-15-1-2388, and a Google Faculty Research Award.}
  }

\title{An improved analysis of the ER-SpUD dictionary learning algorithm}

\setcounter{page}{0}

\maketitle

\begin{abstract}
In {\em dictionary learning} we observe $Y = AX + E$ for some $Y\in\R^{n\times p}$, $A \in\R^{m\times n}$, and $X\in\R^{m\times p}$, where $p\ge \max\{n, m\}$, and typically $m \ge n$. The matrix $Y$ is observed, and $A, X, E$ are unknown. Here $E$ is a ``noise'' matrix of small norm, and $X$ is column-wise sparse. The matrix $A$ is referred to as a {\em dictionary}, and its columns as {\em atoms}. Then, given some small number $p$ of samples, i.e.\ columns of $Y$, the goal is to learn the dictionary $A$ up to small error, as well as the coefficient matrix $X$. In applications one could for example think of each column of $Y$ as a distinct image in a database. The motivation is that in many applications data is expected to sparse when represented by atoms in the ``right'' dictionary $A$ (e.g.\ images in the Haar wavelet basis), and the goal is to learn $A$ from the data to then use it for other applications.

Recently, the work of \cite{SpielmanWW12} proposed the dictionary learning algorithm \erspud with provable guarantees when $E = 0$ and $m = n$. That work showed that if $X$ has independent entries with an expected $\theta n$ non-zeroes per column for $1/n\lesssim \theta \lesssim 1/\sqrt{n}$, and with non-zero entries being subgaussian, then for $p\gtrsim n^2\log^2 n$ with high probability \erspud outputs matrices $A', X'$ which equal $A, X$ up to permuting and scaling columns (resp.\ rows) of $A$ (resp.\ $X$). They conjectured that $p\gtrsim n\log n$ suffices, which they showed was information theoretically necessary for {\em any} algorithm to succeed when $\theta \simeq 1/n$. Significant progress toward showing that $p\gtrsim n\log^4 n$ might suffice was later obtained in \cite{LuhV15}.

In this work, we show that for a slight variant of \erspud, $p\gtrsim n\log(n/\delta)$ samples suffice for successful recovery with probability $1-\delta$. We also show that without our slight variation made to \erspud, $p\gtrsim n^{1.99}$ samples are required even to learn $A, X$ with a small success probability of $1/\mathop{poly}(n)$. This resolves the main conjecture of \cite{SpielmanWW12}, and contradicts a result of \cite{LuhV15}, which claimed that $p\gtrsim n\log^4 n$ guarantees high probability of success for the original \erspud algorithm.
\end{abstract}

\section{Introduction}
The \emph{dictionary learning} or \emph{sparse coding} problem is defined as follows. There is a hidden set of vectors $a_1, a_2, \ldots a_m \in \mathbb{R}^n$ (called a ``dictionary''), with $\mathop{span}\{a_1, \ldots a_m\} = \mathbb{R}^n$. We are given a sequence of samples $y_i = A x_i + \epsilon_i$, where each $x_i$ is a sparse vector and $\epsilon_i$ is noise. In other words each $y_i$ is close to a linear combination of few vectors $a_k$. The goal is to recover both matrix $A$ and the sparse representations $x_i$. We can write it as a matrix equation
\begin{equation*}
Y = A X + E
\end{equation*}
where the vectors $y_i$ are the columns of $Y$, and $x_i$ are columns of $X$. Let $A \in \R^{n\times m}$ and $X \in \R^{m \times p}$. Traditionally, and as motivated by applications, the interesting regime of parameters is when $A$ is of full row rank (in particular $n \leq m$) \cite{AgarwalAJNT14}.

The dictionary learning problem is motivated by the intuition that the dictionary $A$ is in some sense the ``right'' spanning set for representing vectors $y_i$ since it allows sparse representation. In some domains this correct basis is known thanks to a deep understanding of the domain in question: for example the Fourier basis for audio processing, or Haar wavelets for images. Here we want to infer analogous ``nice'' representations of the data from the data itself.  As it turns out, even in situations such as audio and image processing in which traditional transforms are useful, replacing them with dictionaries learned directly from data turned out to improve quality of the solution (see for example \cite{EladA06}, which applied a dictionary learning algorithm for image denoising).

This problem has found a tremendous number of applications in various areas, such as image and video processing (e.g. \cite{MairalBPSZ09, BrytE08, EladA06}; see \cite{MairalBP14} for more references), image classification \cite{RainaBLPN07,MairalBPSZ08} as well as neurobiology \cite{LiYBXGS14}. Given its huge practical importance, a number of effective heuristics for dictionary learning were proposed \cite{AharonEB06, MairalBPS10} --- those are based on iterative methods for solving the (non-convex) optimization problem of minimizing the sparsity of $X'$ subject to $Y$ being close to $A'X'$. Some of these algorithms work well in practice but without provable guarantees.

\subsection{Prior work} 
\begin{figure}
	\begin{tabular}{|cccccc|}
                \hline
		ref & sample complexity & noise & overcomplete & sparsity & arbitrary dict. \\
		\hline
		\hline
		\cite{SpielmanWW12} & $\Oh(n^2 \log^2 n) $ & No & No & $\Oh(\sqrt{n})$ & Yes \\
                \hline
		\cite{AgarwalAJNT14} & $\Oh(m^2)$ & No & Yes & $ \Oh(n^{1/4})$ & No \\
                \hline
        \cite{AroraGM14} & $\Oh(m^2 s^{-2} + s^2 m)$ & Yes & Yes &  $ \Oh(\min(m^{2/5}, \frac{\sqrt{n}}{\log n}))$ & No \\ 
                \hline
		\cite{AroraGM14} & $\Oh(\mathrm{poly}(m))$ & Yes & Yes & $ \Oh(n^{1/2 - \epsilon})$ & No \\
                \hline
		\cite{AroraBGM14}* & $\Oh(\mathrm{poly}(m))$ & No & Yes & $\Oh(n/\mathrm{polylog}(n))$ & No \\
                \hline
		\cite{BarakKS15} & $\Oh(\mathrm{poly}(m))$ & Yes & Yes & $\Oh(n^{1-\epsilon})$ & Yes \\
                \hline
		\cite{BarakKS15}* & $\Oh(\mathrm{poly}(m))$ & Yes & Yes & $\Oh(n)$ & Yes \\
                \hline
        \cite{SunQW15} & $\Oh(\mathrm{poly}(m, \kappa(A))))$ & No & No & $\Oh(n)$ & Yes \\
        \hline
        \cite{VempalaX15} & $\Oh(\mathrm{poly}(n))$ & Yes & No & $\Oh(n)$ & Yes \\
                \hline
		\cite{LuhV15}\footnotemark & $\Oh(n \log^4 n)$ & No & No & $\Oh(\sqrt{n})$ & Yes \\
                \hline
		This work & $\Oh(n \log n)$ & No & No & $\Oh(\sqrt{n})$ & Yes \\
                \hline
	\end{tabular}
	\caption{
		Comparison of algorithms with proven guarantees for dictionary learning. Last column indicates whether the dictionary can be arbitrary, or if additional structure is assumed in order to guarantee recovery. Algorithms marked with star require quasi-polynomial running time. $\kappa(A)$ denotes condition number.}\FigureName{bigtable}
\end{figure}\footnotetext{As written, their work has certain errors which we discuss later in detail. Nevertheless, using some of our approaches we believe it should be possible to salvage their sample complexity bound in the Bernoulli-gaussian model for $X$, but not in the more general Bernoulli-subgaussian model (since in particular, $p\gtrsim n^{1.99}$ samples are required for that algorithm even to succeed with polynomially small {\em success} probability; see \Section{bernoulli-rademacher-lower-bound}.}
Until recently there was little theoretical understanding of the dictionary learning problem.  Spielman, Wang and Wright in \cite{SpielmanWW12} proposed the first algorithm that provably solves this problem in some regime of parameters. More concretely, they assumed no presence of noise (i.e. $E = 0$), and that $A$ is a basis (that is $n = m$), potentially adversarially chosen. The vectors $x_i$ are sampled independently at random from some distribution --- specifically, each entry $x_{i,j}$ is nonzero with probability $1-\theta$, and once it is nonzero, it is a symmetric subgaussian random variable (i.e.\ with tails decaying at least as fast as a gaussian), independent from every other entry. Henceforth we say that a matrix $X\in\R^{n\times p}$ follows the {\em Bernoulli-subgaussian model} with parameter $\theta$, if the entries  $X_{i,j}$ are i.i.d.\ with $X_{i,j} = \chi_{i,j} g_{i,j}$, where $\chi_{i,j} \in \{0,1\}$ are Bernoulli random variables with $\E \chi_{i,j} = \theta$, and $g_{i,j}$ are symmetric subgaussian random variables. We also say that $X$ follows the Bernoulli-Rademacher model if $g_{i,j}$ in the above definition are independent Rademachers (i.e.\ uniform $\pm 1$).

Under the Bernoulli-subgaussian model for $X$, \cite{SpielmanWW12} proved that once the number of samples $p$ is $\Omega(n \log n)$ and the sparsity $s = \theta n$ (i.e. expected number of nonzero entries in each column of $X$) is at least constant and at most $\Oh(n)$, the matrix $Y$ with high probability has a unique decomposition as a product $Y = AX$, up to permuting and rescaling rows of $X$. Moreover, the number of samples $p = \Omega(n \log n)$ was proven to be optimal in the constant sparsity regime $s = \Theta(1)$. In particular, it is possible in principle to find such a decomposition information-theoretically, but unfortunately not necessarily with an efficient algorithm.

In addition to the above, they proposed an efficient algorithm \erspud (\emph{Efficient Recovery of Sparsely Used Dictionaries}) to find this unique decomposition, in a more restricted regime of parameters. Namely, they proposed an algorithm and proved that it finds correctly the unique decomposition $Y = A X$, with high probability over $X$, as long as the sparsity $s$ is at least constant and at most $\Oh(\sqrt{n})$, and the number of samples $p$ is at least $\Omega(n^2 \log^2 n)$. The low sparsity constraint was inherent to their solution: according to the proof in the same paper, if $s=\Omega(\sqrt{n \log n})$ the algorithm with high probability fails to find the correct decomposition. They conjectured however, that with the number of samples $p$ as small as $\Oh(n \log n)$, \erspud should return the correct decomposition with high probability, matching the sample lower bound for when $s = O(1)$.

Since then, much more theoretical work has been dedicated to the dictionary learning problem; see \Figure{bigtable}. In the work of Agarwal et al.\ \cite{AgarwalAJNT14}, and independently Arora et al.\ \cite{AroraGM14}, an algorithm was proposed that works for overcomplete dictionaries $A$ (i.e. when $m > n$), under additional structural assumptions on $A$ --- namely that $A$ is \emph{incoherent}, i.e.\ the projection of any standard basis vector onto the column space of $A$ has small norm. The algorithm presented in \cite{AgarwalAJNT14} require $p=\tilde{\Oh}(m^2)$ samples, where $\tilde{\Oh}(f) = \Oh(f\cdot\log^{O(1)}(f))$. More detailed analysis of the dependence between sparsity and number of samples was provided in the work \cite{AroraGM14} for their algorithm --- for $s = \Oh(\min(\frac{\sqrt{n}}{\log n}, m^{2/5}))$, they require $\tilde{\Omega}(m^2 s^{-2} + m s^2)$ samples; if $s$ is larger than $m^{2/5}$, but smaller than $\Oh(\min(m^{1/2 - \varepsilon}, \frac{\sqrt{n}}{\log n}))$ the algorithm require $\Oh(m^C)$ samples, where $C$ is a large constant depending on $\varepsilon$.  In the lowest sparsity regime, i.e. $s = \Oh(\polylog(n))$, the sample complexity stated in their analysis simplifies to $\tilde{\Omega}(m^2)$, for comparison in the most favorable sparsity regime $s = \Theta(m^{1/4})$, the number of samples necessary for correct recovery is $\Omega(m^{3/2})$.  The work \cite{AroraGM14} also proves correct recovery by this algorithm in the presence of noise. Later Arora et al.\ \cite{AroraBGM14} gave a quasipolynomial time algorithm working for sparsity up to $\Oh(n/\mathrm{polylog}(n))$, but under much stronger assumptions on the structure of $A$. Those assumptions include in particular, that the dictionary $A$ itself is assumed to be sparse, which is violated in many natural examples, e.g.\ the discrete Fourier basis. They prove that their algorithm correctly recovers the hidden dictionary given access to $p = \Oh(m^C)$ samples, for some unspecified constant $C$.

Barak et al.\ \cite{BarakKS15} proposed an algorithm fitting in the Sum-of-Squares framework, which works in polynomial time for sparsity $\Oh(n^{1-\epsilon})$ and in quasipolynomial time for sparsity as large as $\Oh(n)$, again given access to $\Oh(m^C)$ samples for some unspecified constant $C$. Moreover, this algorithm works under the presence of noise and a more general model of $X$. In particular, coordinates within a single column are not required to be fully independent. Recently, Sun et al.\ \cite{SunQW15} proposed a polynomial time algorithm for the case when $n=m$ and sparsity is as large as $\Oh(n)$. Their result works in the similar model as in \cite{SpielmanWW12}, without any additional assumptions on the matrix $A$, and with matrix $X$ having independent entries that are product of Bernoulli and gaussian random variables (as opposed to the weaker subgaussian assumption in \cite{SpielmanWW12}). The sample complexity depends polynomially on $n$ and the condition number of the dictionary matrix $A$. In particular, in the low sparsity regime ($s = \Theta(\polylog(n))$), this sample complexity is as large as $\tilde{\Omega}(n^9)$ even if the matrix $A$ is well conditioned.

Work on Independent Component Analysis (ICA) \cite{FriezeJK96,NguyenR09,BelkinRV13,AroraGMS15,GoyalVX14,VempalaX15} is also relevant to the dictionary learning problem. In this problem, again one is given $Y  = AX+E$ for square $A$, with the assumption that the entries of $X$ are i.i.d.\ (and $X$ need not necessarily be sparse). The works in ICA then say that $A, X$ can be efficiently recovered using few samples, but where the sample complexity depends on the distribution of entries of $X$. For example in the case of Bernoulli-Rademacher entries with $\theta = 1/n$ (constant sparsity per column of $X$), these works require large polynomial sample complexity. For example, \cite[Theorem 1]{VempalaX15} implies a sufficient sample complexity in this setting of $p\gg n^{12}$.

From \Figure{bigtable}, one can see that the ``holy grail'' of dictionary learning is to achieve the following features simultaneously: (1) low sample complexity, i.e.\ nearly-linear in the dimension $n$ and number of atoms $m$, (2) the ability to handle noise (the more noise handled the better), (3) handling overcomplete dictionaries, (4) handling a larger range of sparsity, with $s = O(n)$ being the best, (5) making no assumptions on the dictionary $A$, (6) a fast algorithm to actually learn the dictionary from samples, and (7) making few assumptions on the matrix $X$.

Most of the aforementioned results focus on weakening the sparsity constraint under which it is possible to perform efficient learning, or handling overcomplete dictionaries or noise. These all, however, come at an expense: the number of samples necessary for those algorithms to provably work is quite large, often of order $n^C$ for some large constant $C$. Some of the algorithms also make strong assumptions on $A$, and/or have quasi-polynomial running time.

Recently, Luh and Vu in \cite{LuhV15} made significant progress toward showing that the \erspud algorithm proposed in \cite{SpielmanWW12} actually solves the dictionary learning problem already with $p = \Oh(n \log^4 n)$ samples. They claimed to prove that this $p$ in fact suffices for dictionary learning. In fact however, several probabilistic events were analyzed in \cite{SpielmanWW12}, and if they all occurred then \erspud performed correct recovery. The work \cite{LuhV15} analyzed arguably the most complex of these events more efficiently, showing a certain crucial inequality held with good probability when $p\gtrsim n\log^4 n$. Unfortunately there is a gap: \cite{SpielmanWW12} required this inequality to hold for exponentially many settings of variables, and thus one wants the inequality to hold for any fixed instantiation with very high probability to then union bound, and \cite{LuhV15} does not provide such a probabilistic analysis (see \Remark{exponential}). More seriously, there are other events defined in \cite{SpielmanWW12} which {\em require} $p\gtrsim n^2$ to hold whp in the Bernoulli-subgaussian model (except in the case the subgaussians are actual gaussians), and \cite{LuhV15} did not discuss these events at all (see for example \Remark{column-pair}). In fact, in \Section{bernoulli-rademacher-lower-bound} we prove that in the Bernoulli-Rademacher model the \erspud algorithm of \cite{SpielmanWW12} actually {\em requires} $p\gtrsim n^{1.99}$ to succeed with probability even polynomially small in $n$, contradicting the main result of \cite{LuhV15} which claimed $1-o(1)$ successful learning for $p$ nearly linear in $n$.

\medskip

\paragraph{Our contribution:} We very slightly modify the algorithm \erspud to obtain another polynomial-time dictionary learning algorithm ``\erspuddctwo'', which circumvents our $p\gtrsim n^{1.99}$ lower bound for \erspud in the Bernoulli-subgaussian model. We then show that \erspuddctwo provides correct dictionary learning with probability $1-\delta$ with sparsity $s = \Oh(\sqrt{n})$ as long as $p\gtrsim n\log(n/\delta)$. In particular our result shows that a slight modification of \erspud provides correct dictionary learning for complete dictionaries with no noise, which provably works with high probability using $p\gtrsim n\log n$ samples. This resolves the main open problem of \cite{SpielmanWW12}.

\medskip

Furthermore, the work of \cite{LuhV15} observed that the method of their proof is connected to generic chaining, but that after a certain point the methods ``become different in all aspects'' \cite[Section G]{LuhV15}. They also advertised and proved a new ``refined version of Bernstein’s concentration inequality for a sum of independent variables''. Unlike their work, our analysis has the benefit of using standard off-the-shelf concentration and chaining results, thus making the proof simpler and more easily accessible since it is less ad-hoc.

\subsection{Approach overview}

In \Figure{erspud} we give the algorithm \erspuddctwo analyzed in this work, a slight modification of \erspuddc from \cite{SpielmanWW12}. The only difference between DCv2 and the original DC variant in \cite{SpielmanWW12} is that we try all $\binom{p}{2}$ pairings of columns, whereas DC tried a random pairing of the $p$ columns into $p/2$ pairs. As we will see soon, one of the several conditions in \cite{SpielmanWW12} necessary for their proof of successful recovery of $(A, X)$ from $Y$ actually requires $p = \Omega(n^2)$ if using the DC variant (see \Remark{column-pair}), and hence our switch to DCv2 allows $p$ to be reduced to $\Oh(n\log n)$. In any case, this issue is easily circumvented by switching to DCv2 as we shall soon justify.

Henceforth when we refer to \erspud, we are referring to \erspuddctwo unless we state otherwise.

\begin{figure}
\begin{algbox}
\noindent \erspuddctwo:\texttt{ Exact Recovery of Sparsely-Used Dictionaries using the sum of two columns of $Y$ as constraint vectors.}
\begin{enumerate}
\item Create all $T = \binom{p}{2}$ pairings of columns of $\boldsymbol Y$ and for $j\in[T]$ write $g_j=\{\boldsymbol Y\boldsymbol e_{j_{1}},\boldsymbol Y\boldsymbol e_{j_{2}}\}$.
\item For $j=1\dots T$
\begin{enumerate}
\item [] Let $\boldsymbol r_j=\boldsymbol Y\boldsymbol e_{j_{1}}+\boldsymbol Y\boldsymbol e_{j_{2}}$, where $g_j = \left\{\boldsymbol Y\boldsymbol e_{j_{1}}, \boldsymbol Y \boldsymbol e_{j_{2}}\right\}  $.
\item [] Solve $\min_{\boldsymbol w} \; \| \boldsymbol w^T \boldsymbol Y\|_1  \text{ subject to }  \boldsymbol r_j^T \boldsymbol w = 1,$
   and set $\boldsymbol  s_{j} = \boldsymbol w^{T} \boldsymbol Y$. 
\end{enumerate}
\end{enumerate}
\end{algbox}
\vspace{-0.2in}
\begin{algbox}
\noindent \textbf{Greedy:}\texttt{ A Greedy Algorithm to Reconstruct $\boldsymbol X$ and $\boldsymbol A$.}
\begin{enumerate}
\item \textbf{REQUIRE:} $\mathcal S = \{ \boldsymbol s_1, \dots, \boldsymbol s_T \} \subset \R^p$. 
\item For $i=1\dots n$
\begin{enumerate}
\item [] REPEAT
\begin{enumerate}
\item[] $l \gets \arg\min_{\boldsymbol s_l \in \mathcal S}\| \boldsymbol s_l \|_0$, breaking ties arbitrarily
\item[] $\boldsymbol x_i=\boldsymbol s_l$
\item[] $\mathcal S=\mathcal S \backslash \{\boldsymbol s_l\}$
\end{enumerate}
\item[] \textbf{UNTIL} \texttt{rank([$\boldsymbol x_1,\dots, \boldsymbol x_i$])$=i$}
\end{enumerate}
\item Set $\boldsymbol X=[\boldsymbol x_1,\dots, \boldsymbol x_n]^T$, and $\boldsymbol A = \boldsymbol Y \boldsymbol Y^T(\boldsymbol X \boldsymbol Y^T)^{-1}$.
\end{enumerate}
\end{algbox}
\caption{\erspud recovery algorithm.}\FigureName{erspud}
\end{figure}

The main insight in the recovery analysis of \cite{SpielmanWW12} is that the last line of the \erspud pseudocode in \Figure{erspud} can be rewritten (only in the analysis, since $A,X$ are unknown) as $\min_w \|w^T AX\|_1$ subject to $(A(Xe_{j_1} + Xe_{j_2}))^T w = 1$. Then writing $z = A^T w$, this linear program (LP) is equivalent to the secondary LP $\min_z \|z^T X\|_1$ subject to $b_j^T z = 1$, since we could recover $w = (A^T)^{-1} z$ since $A$ is invertible. Here $b_j$ denotes $Xe_{j_1} + Xe_{j_2}$. The ideal case then is that the only optimal solution to the second LP will be a vector $z_*$ that is $1$-sparse. In this case, the solution to the LP that we {\em actually} solve is equal to $w_* = (A^T)^{-1} z_* = (z_*^T A^{-1})^T$ and thus a scaled row of $A^{-1}$, implying $w_*^T Y$ is a scaled row of $X$. Thus, if $z_*$ is $1$-sparse in the second LP, then the solution to the first LP allows us to recover a scaled row of $X$.

The work \cite{SpielmanWW12} then outlines certain conditions for $X$ that, if they hold, guarantee correct recovery of $(A, X)$. We now state these deterministic conditions, as per \cite{SpielmanWW12}, which imply correct recovery of $(A,X)$ via \erspud when they all simultaneously hold.

\begin{itemize}
\item[\bf(P0)] Every row of $X$ has positive support size at most $(10/9)\theta p$. Furthermore, every linear combination of rows of $X$ in which at least two of the coefficients in the linear combination are non-zero has support size at least $(11/9)\theta p$.
\item[\bf(P1)] For every $b$ satisfying $\|b\|_0 \le 1/(8\theta)$, any solution $z_*$ to the optimization problem
\begin{equation}
\min \|z^T X\|_1 \text{ subject to } b^T z = 1 \EquationName{secondary-lp}
\end{equation}
has $\supp(z_*) \subseteq \supp(b)$.
\item[\bf(P2)] Let $q$ be $\frac{1}{8\theta}$. For every $J\in\binom{[n]}q$ and every $b\in\R^n$ satisfying $|b|_{(2)}/|b|_{(1)} \le 1/2$, the solution to the restricted problem
\begin{equation}
\|z^T X_{J,*}\|_1 \text{ subject to } b^T z = 1 \EquationName{secondary-lp'}
\end{equation}
is unique, $1$-sparse, and is supported on the index of the largest entry of $b$. Here $|b|$ is the vector whose $i$th entry is $|b_i|$, and $|b|_{(j)}$ is the $j$th largest entry of $|b|$. Also, $X_{J,*}$ denotes the submatrix of $X$ with rows in $J$.
\item[\bf(P3)] For every $i\in[n]$ there exist a pair of columns $Xe_{j_1}$ and $Xe_{j_2}$ in $X$ such that for $b = Xe_{j_1} + Xe_{j_2}$ with support $J$, we have that $0<|J| \le 1/(8\theta)$, $|b|_{(2)}/|b|_{(1)} \le 1/2$, and the unique largest entry of $|b|$ has index $i$.
\end{itemize}

The main result of \cite{SpielmanWW12} is then obtained by proving the following theorem, and then by showing that {\bf(P0)}--{\bf(P3)} all hold whp for $p\gtrsim n^2\log^2 n$.

\begin{theorem}[{\cite{SpielmanWW12}}]\TheoremName{correctness}
Suppose conditions {\bf(P0)}--{\bf(P3)} all hold. Then \erspud and \textbf{Greedy} from \Figure{erspud} recover $(A', X')$ such that $X' = \Pi D X$ and $A = A D^{-1} \Pi^{-1}$ for some diagonal scaling matrix $D$ and permutation matrix $\Pi$. That is, the recovered $(A', X')$ are correct up to scaling and permuting rows (resp.\ columns) of $X$ (resp.\ $A$).
\end{theorem}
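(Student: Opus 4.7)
The plan is to leverage the LP reformulation described right before the theorem statement and then argue each of the pseudocode stages produces what we want, essentially by reading off the conditions (P0)--(P3) in turn. Every LP solved inside the main loop of \erspuddctwo has the form $\min_w\|w^T Y\|_1$ subject to $r_j^T w=1$. Writing $z=A^T w$, which is an invertible change of variables since $A$ is a basis, this equals $\min_z\|z^T X\|_1$ subject to $b_j^T z=1$, where $b_j=Xe_{j_1}+Xe_{j_2}$. Hence every output $s_j=w^T Y$ equals $z_\star^T X$ for some minimizer $z_\star$ of the secondary LP \Eqsub{secondary-lp} at $b=b_j$, so each $s_j$ is a linear combination of rows of $X$.

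Next I would show that for every $i\in[n]$ some index $j$ produces $s_j$ proportional to the $i$-th row of $X$. Fix $i$ and apply (P3) to obtain a pair $(j_1,j_2)$ so that $b_j=Xe_{j_1}+Xe_{j_2}$ has support $J$ of size at most $q=1/(8\theta)$, with $|b_j|_{(2)}/|b_j|_{(1)}\le 1/2$ and largest-magnitude entry at index $i$. By (P1) every minimizer $z_\star$ of \Eqsub{secondary-lp} is supported in $J$, so the objective value at $z_\star$ equals $\|z_\star^T X_{J,*}\|_1$, and the secondary LP is equivalent to its restriction \Eqsub{secondary-lp'} (after, if $|J|<q$, padding $J$ arbitrarily to size $q$ which only enlarges the feasible set without changing the optimum, thanks to (P1)). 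By (P2) this restricted LP has a unique $1$-sparse optimum supported on the argmax of $|b_j|$, which is $i$. Therefore the unique $z_\star$ equals $c\, e_i$ for some scalar $c\ne 0$, giving $s_j=z_\star^T X=c\cdot e_i^T X$, a nonzero scalar multiple of the $i$-th row of $X$.

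It remains to show that \greedy correctly extracts, from $\mathcal S=\{s_1,\dots,s_T\}$, one scalar multiple of each row of $X$, and then that the last formula recovers $A$. Recall every $s_j=z_j^T X$ is a linear combination of rows of $X$. By the second clause of (P0), any linear combination with two or more nonzero coefficients has $\ell_0$-norm at least $(11/9)\theta p$, strictly larger than the upper bound $(10/9)\theta p$ on the $\ell_0$-norm of any single row supplied by the first clause. The same clause also implies the rows of $X$ are linearly independent, since any nontrivial dependence would give a combination of $\ell_0$-norm zero. Thus the $n$ scaled-row elements of $\mathcal S$ guaranteed by the previous paragraph are $\ell_0$-minimal compared to any ``mixed'' element of $\mathcal S$, so \greedy keeps picking (scaled) rows of $X$, discarding duplicates via its rank test, until it has accumulated exactly one copy of each of the $n$ rows; after reindexing, the resulting matrix is $X'=\Pi D X$ for some permutation $\Pi$ and invertible diagonal $D$. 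Finally, using $Y=AX$, one computes $X' Y^T=\Pi D X X^T A^T$ and $Y Y^T=A X X^T A^T$; since $X$ has full row rank, $X X^T$ is invertible, and a direct calculation yields $A'=Y Y^T (X' Y^T)^{-1}=A D^{-1}\Pi^{-1}$, which is exactly the claimed form.

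The only substantive worry in this plan is the padding step when applying (P2): one must make sure that extending $J$ to size $q$ does not change the set of optimal $z_\star$'s, which is immediate because (P1) already pins their support inside the unpadded $J$, so the objective and feasibility on $\R^J$ and $\R^{J'}$ coincide. Everything else is essentially a careful bookkeeping of (P0)--(P3).
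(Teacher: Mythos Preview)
Your proposal is correct and follows essentially the same route as the paper's proof: use (P3) to locate a good pair for each $i$, (P1) to restrict the secondary LP to the support $J$, (P2) to force the optimizer to be $1$-sparse at index $i$, and (P0) to guarantee that scaled rows are strictly sparser than any genuine mixture so that \greedy extracts exactly one copy of each row. Your write-up is in fact slightly more careful than the paper's in two places: you explicitly address the padding of $J$ to size $q$ before invoking (P2), and you verify the recovery of $A'$ using the formula $A'=YY^T(X'Y^T)^{-1}$ that actually appears in the pseudocode, whereas the paper's proof silently substitutes the equivalent expression $YX'^T(X'X'^T)^{-1}$.
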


It was implicit in \cite{SpielmanWW12}, and made explicit in \cite{LuhV15}, that to analyze the probability {\bf (P1)} holding as a function of $p$, it suffices to prove some upper bound on some stochastic process. Namely, \cite{LuhV15} proves that for $\Pi$ a Bernoulli-subgaussian matrix with $p$ rows, for $p = \Omega(n \log^4 n)$
\begin{equation}
	\Pr\left(\sup_{\|v\|_1=1} |\|\Pi v\|_1 - \E \|\Pi v\|_1| < c_0 \mu_{min}\right) > 1 - \oh(1)
	\EquationName{stoch-process}
\end{equation}
for some constant $c_0 < 1$, and $\mu_{min} := \inf_{\|v\|_1 = 1} \E \|X^T v\|_1$. Both \cite{SpielmanWW12,LuhV15} though required the stochastic process of \Equation{stoch-process} to be bounded for roughly $\binom{n}{1/(8\theta)}$ choices of $\Pi$, formed by taking various submatrices of $X^T$. The naive approach is to then argue that the inequality holds with failure probability $\ll 1/\binom{n}{1/(8\theta)}$ for a fixed $\Pi$ so then union bound over all such submatrices. Unfortunately the failure probability in \cite{LuhV15} was not made explicit and was only given as $\oh(1)$. In fact, it is likely that making the failure probability explicit would force $p\gg n^{3/2}$ for some sparsity settings (see \Remark{exponential}).

We show that, first of all, {\bf (P1)} can be relaxed to some {\bf (P1')} such that it suffices to only show \Equation{stoch-process} holds for polynomially many submatrices of $X$; showing {\bf (P1')} suffices requires only a very minor change in the previous analysis of \cite{SpielmanWW12}. Next, more importantly, show that $p\gtrsim n\log(n/\delta)$ suffices for \Equation{stoch-process} to hold with probability $1-\delta$. This is one of our main technical contributions, and is established using a generic chaining argument \cite{Talagrand14}. It is worth pointing out that simpler chaining inequalities, such as Dudley's inequality, would yield suboptimal results in our setting by logarithmic factors.

Next, we also show that {\bf (P2)} can be weakened to some other event {\bf (P2')} that holds whp as long as $p\gtrsim \theta^{-1}\log(n/\delta)$. Establishing this only requires a minor change in the analysis of \cite{SpielmanWW12}. 

Finally, in \Lemma{p3} we show that event {\bf (P3)} holds whp for $p \gtrsim n\log(n/\delta)$. This is the part where the modification of the algorithm was necessary, so that pairs of columns $X e_{j_1}$ and $X e_{j_2}$ mentioned in this condition refers to all $\binom{p}{2}$ pairs of columns, as opposed to a fixed pairing (with $\lfloor \frac{p}{2} \rfloor$ pairs). Note that this condition actually fails to hold for the unmodified version of the algorithm with $p \ll n^2$, for example when the matrix $X$ is drawn from the Bernoulli-Rademacher model, which is the main reason the unmodified algorithm fails to perform recovery (see \Section{bernoulli-rademacher-lower-bound}).

\subsection{Recent and independent work}
In a recent and independent work, Adamczak showed a main result similar to ours \cite{Adamczak16}. In particular, he showed that by making the same modification to \erspud that we have made (\erspuddctwo), $p\gtrsim n\log n$ suffices for successful dictionary learning with probability $1 - 1/(n\log n)$. Unlike our analysis which is based on Bernstein's inequality and generic chaining, the proof in \cite{Adamczak16} combines Bernstein's inequality with Talagrand's contraction principle, which leads to an overall simpler proof than ours. The main differences in the results themselves are that attention in \cite{Adamczak16} was not given to dependence of $p$ on the failure probability $\delta$, and the analysis of our \Section{bernoulli-rademacher-lower-bound} that \erspuddc fails for $p\ll n^2$ also does not appear there, so that our stated results are slightly stronger in these regards.

\section{Sufficient conditions for successful recovery \SectionName{conditions}}

We first explain why all conditions {\bf(P0)}--{\bf(P3)} holding simultaneously implies \erspud correctly recovers $(A, X)$. This argument appears in \cite{SpielmanWW12}, but since it is quite short we repeat it here for the benefit of the reader. Afterward, we slightly change {\bf(P1)} and {\bf(P2)} to similar conditions {\bf(P1')}, {\bf(P2')} which still suffice for correct operation of \erspud, and we show that all conditions above (with {\bf(P1)} replaced by {\bf(P1')} and \textbf{(P2)} by \textbf{(P2')}) hold simultaneously with probability $1-\delta$ as long as $p\gtrsim n\log(n/\delta)$. For all the conditions except {\bf(P0)}, the original analysis of \cite{SpielmanWW12} required $p\gg n^2$, which we cannot afford here, and hence we provide more efficient analyses here.

\medskip

\begin{proofof}{\Theorem{correctness}}
We first show that for every row $X_{i,*}$ of $X$, there is some $j$ so that $s_j$ from the output of \erspud is some scaling of $X_{i,*}$. By \textbf{(P3)} there is some pair of columns $X e_{j_1}, X e_{j_2}$ so that their sum $b$ has support $J$ with $0<|J|\le 1/(8\theta)$, and $|b|_{(2)}/|b|_{(1)} \le 1-\gamma_0$, and furthermore the unique largest entry of $|b|$ is at index $i$. Since $|J| \le 1/(8\theta)$, \textbf{(P1)} implies any solution $z_*$ to \Eqsub{secondary-lp} has support contained in $J$. Therefore \Eqsub{secondary-lp} has the same set of optimal solutions as \Eqsub{secondary-lp'}. By \textbf{(P2)} we thus know that the optimal solution is some $z_*$ which is $1$-sparse, supported only on index $i$. Therefore the corresponding $w_*$ obtained from \erspud is some scaling of $X_{i,*}$.

The above only shows all rows of $X$ appear as some $s_j$ (possibly scaled). However, many $s_j$ found may not be any scaled row of $X$ at all. We now complete the proof. First, observe \textbf{(P0)} implies $X$ has rank $n$ (if not, then either some row of $X$ is zero, which \textbf{(P0)} forbids, or some linear combination of at least two rows is zero, but the zero vector has sparsity $0<(11/9)\theta p$, and thus this also cannot happen). Therefore, the $n$ rows of $X$ are exactly the $n$ sparsest vectors in the rowspace of $X$ (up to scaling). Since they all appear as outputs of \erspud, scaled, they are then exactly the $n$ rows returned by \greedy in some order. Thus \greedy returns $X' = \Pi D X$ as desired. Noting $Y = AX$, we see $A = Y X^T (XX^T)^{-1}$. Meanwhile, \greedy returns
$$ A' = YX'^T (X' X'^T)^{-1} = Y X^T D\Pi^T (\Pi^T)^{-1} D^{-1} (XX^T)^{-1} D^{-1} \Pi^{-1} = A D^{-1} \Pi^{-1}. $$
\end{proofof}

\begin{remark}
\textup{
It is worth noting that the proof of \Theorem{correctness} implies that \greedy could be replaced by the following simpler algorithm and still maintain correctness under {\bf(P0)}--{\bf(P3)}: for each $s_j$ in order, remove any other $s_{j'}$ which are scaled copies of $s_j$, then return the $n$ sparsest $s_j$ remaining to be the rows of $X$.
}
\end{remark}

In the proof of \Theorem{correctness}, observe that \textbf{(P1)} is not invoked for {\it every} one of the possible sparsity patterns for $b$ (of which there are at least $\binom{n}{q}$ where $s = 1/(8\theta)$), and \textbf{(P2)} is not invoked for all possible choices of $J$. Rather, in the proof, the effects of \textbf{(P1)} and \textbf{(P2)} are only needed for the at most $\binom{p}{2}$ vectors $b$ that are non-zero, at most $1/(8\theta)$-sparse, and expressible as the sum of two columns of $X$. We now define \textbf{(P1')}, \textbf{(P2')} as follows.
\begin{itemize}
\item[\bf(P1')] For every $b$ that can be expressed as the sum of two columns of $X$,
\begin{equation}
\forall v\in\R^{|\bar{J}|},\ \|v^T X_{\bar{J},*}\|_1 - 2\|v^T X_{\bar{J},S}\|_1 > C p\sqrt{\frac{\theta}{|\bar{J}|}}\|v\|_1
\end{equation}
and
\begin{equation}
|S| < p/4
\end{equation}
where $C> 0$ is some fixed constant, $J = \supp(b)$, $\bar{J} = [n]\backslash J$, and $S\subseteq [p]$ is the set of columns of $X$ with support intersecting $J$.
\item[\bf(P2')] Let $q$ be $\frac{1}{8\theta}$. For every $b$ equaling the sum of two columns of $X$ and with $J \subset[n]$ its support, let $b'\in\R^{|J|}$ be the projection of $b$ onto its support. If $0< |J| \le q = 1/(8\theta)$ and $|b|_{(2)}/|b|_{(1)} \le 1/2$, then the solution to the restricted problem
\begin{equation}
\|z^T X_{J,*}\|_1 \text{ subject to } (b')^T z = 1 
\end{equation}
is unique, $1$-sparse, and is supported on the index of the largest entry of $b'$. Here $|b'|$ is the vector whose $i$th entry is $|b'_i|$, and $|b'|_{(j)}$ is the $j$th largest entry of $|b'|$. Also, $X_{J,*}$ denotes the submatrix of $X$ with rows in $J$.
\end{itemize}
The following corollary then is immediate from the proof of \Theorem{correctness} and the fact that \textbf{(P1')} implies that, for any $b\neq 0$ with $|\bar{J}| = \Omega(n)$ (which holds for $0<|J| < 1/(8\theta) = O(\sqrt{n})$ as per \textbf{(P3)}), it holds that the optimal solution $z_*$ to $\min \|z^T X\|_1$ subject to $b^T z = 1$ has $\supp(z_*) \subseteq \supp(b)$ (see the proofs of \cite[Lemma 11]{SpielmanWW12} and \cite[Lemma V.2]{LuhV15}).

\begin{corollary}\CorollaryName{correctness}
Suppose conditions {\bf(P0)}, {\bf (P1')}, {\bf (P2')}, and {\bf (P3)} all hold. Then \erspud and \textbf{Greedy} from \Figure{erspud} recover $(A', X')$ such that $X' = \Pi D X$ and $A = A D^{-1} \Pi^{-1}$ for some diagonal scaling matrix $D$ and permutation matrix $\Pi$. That is, the recovered $(A', X')$ are correct up to scaling and permuting rows (resp.\ columns) of $X$ (resp.\ $A$).
\end{corollary}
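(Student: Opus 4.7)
The plan is to follow the proof of \Theorem{correctness} essentially verbatim, substituting \textbf{(P1')} for \textbf{(P1)} and \textbf{(P2')} for \textbf{(P2)}. In that proof, \textbf{(P1)} and \textbf{(P2)} were only applied to vectors $b$ of the form $Xe_{j_1}+Xe_{j_2}$ guaranteed to exist by \textbf{(P3)}; since \textbf{(P1')} and \textbf{(P2')} already restrict attention to exactly these $b$, the same logical skeleton goes through once the support-containment conclusion of \textbf{(P1)} is recovered from the quantitative hypothesis of \textbf{(P1')}. That recovery is the one non-trivial step and is the main obstacle.

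To carry out the verification, fix such a $b$ with support $J$, so that by \textbf{(P3)} one has $|J|\le 1/(8\theta)=O(\sqrt n)$ and hence $|\bar J|=\Omega(n)$. Let $z_*$ be any optimal solution to $\min\|z^TX\|_1$ subject to $b^Tz=1$, and decompose it as $z_*=z_J+z_{\bar J}$ along $J$ and $\bar J$. Let $S\subseteq[p]$ be the set of columns of $X$ whose supports meet $J$; columns outside $S$ are supported in $\bar J$ only, so splitting $\|z_*^TX\|_1$ between columns in $S$ and outside $S$ and applying the triangle inequality produces a lower bound of the form $\|z_{\bar J}^T X_{\bar J,*}\|_1-2\|z_{\bar J}^T X_{\bar J,S}\|_1$ plus contributions from $z_J$. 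Plugging in the inequality guaranteed by \textbf{(P1')} shows that any nonzero $z_{\bar J}$ strictly increases the objective compared with projecting $z_*$ onto $J$ and renormalizing so that $b^Tz=1$ still holds (which is possible because $b$ itself is supported on $J$, so $b^Tz=b^Tz_J$). This is exactly the computation already carried out in \cite[Lemma~11]{SpielmanWW12} and \cite[Lemma~V.2]{LuhV15}, so I would invoke it directly; checking that the constant $C$ and the bound $|S|<p/4$ appearing in \textbf{(P1')} match the thresholds consumed there is the routine but essential bookkeeping.

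Once $\supp(z_*)\subseteq J$ is established, the LP becomes equivalent to the restricted problem over $X_{J,*}$ with constraint $(b')^Tz=1$, where $b'$ is the restriction of $b$ to $J$. Condition \textbf{(P2')} then applies, since by \textbf{(P3)} one has $0<|J|\le 1/(8\theta)$ and $|b|_{(2)}/|b|_{(1)}\le 1/2$, yielding a unique $1$-sparse optimum supported at the index $i$ of the largest entry of $b'$. Transferring back through $w_*=(A^T)^{-1}z_*$ shows that the corresponding output $s_j=w_*^TY$ is a nonzero scaling of $X_{i,*}$, so every row of $X$ appears among the outputs $\{s_j\}$ up to scaling.

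The remainder of the argument, namely that \textbf{Greedy} recovers $X'=\Pi DX$ and that $A'=AD^{-1}\Pi^{-1}$, uses only \textbf{(P0)} and is identical to the corresponding paragraph of \Theorem{correctness}: \textbf{(P0)} forces $X$ to have full row rank and forces the rows of $X$ to be the $n$ strictly sparsest (up to scaling) nonzero vectors in its row span, so \textbf{Greedy} selects exactly these, and the formula for $A'$ follows from $A'=YX'^T(X'X'^T)^{-1}$ together with $Y=AX$.
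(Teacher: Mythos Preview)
Your proposal is correct and follows essentially the same approach as the paper: the paper also treats the corollary as immediate from the proof of \Theorem{correctness}, noting that the only additional ingredient is that \textbf{(P1')} implies the support-containment conclusion $\supp(z_*)\subseteq\supp(b)$ for the relevant $b$ (those with $|\bar J|=\Omega(n)$), and for this it likewise defers to the computations in \cite[Lemma~11]{SpielmanWW12} and \cite[Lemma~V.2]{LuhV15}. Your write-up is simply a more explicit unpacking of that same argument.
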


We now show {\bf(P0)}, \textbf{(P1')}, \textbf{(P2')}, and \textbf{(P3)} all simultaneously hold with probability $1-\delta$ as long as $p\gtrsim n\log(n/\delta)$ and $1/n \lesssim \theta \lesssim 1/\sqrt{n}$, which when combined with \Corollary{correctness} implies that \erspud has the desired correctness guarantee under this same regime for $p, \theta$.

\begin{theorem}\TheoremName{prob-analysis}
For $p\gtrsim n\log(n/\delta)$ and $1/n\lesssim \theta \lesssim 1/\sqrt{n}$,
\begin{equation}
\Pr(\neg{\bf(P0)}\vee \neg{\bf(P1')}\vee \neg{\bf(P2')}\vee \neg{\bf(P3)}) < \delta \EquationName{failure-prob}
\end{equation}
\end{theorem}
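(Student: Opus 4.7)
The plan is to union-bound the four failure events in \Eqsub{failure-prob}, showing each has probability at most $\delta/4$ under $p \gtrsim n\log(n/\delta)$ and $1/n \lesssim \theta \lesssim 1/\sqrt n$. The events \textbf{(P0)}, \textbf{(P2')}, and \textbf{(P3)} admit relatively direct treatments; the main obstacle is \textbf{(P1')}, which requires a sharp uniform concentration bound obtained via generic chaining. For \textbf{(P0)}, a standard multiplicative Chernoff applied row-by-row handles the upper-sparsity clause with probability $\exp(-\Omega(\theta p)) \le \delta/(4n)$ per row, which is summable. For the lower-sparsity clause on linear combinations of $\ge 2$ rows, each column of such a combination is nonzero with probability $\ge 2\theta(1-\theta) - \Oh(\theta^2)$ (cancellations among independent symmetric entries contribute only a lower-order term), and a Chernoff-plus-union-bound argument as in \cite{SpielmanWW12} closes the step at $p \gtrsim n\log(n/\delta)$.

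For \textbf{(P3)} the critical feature is that \erspuddctwo considers all $\binom p2$ column-pairs. Fix $i \in [n]$. By independence across columns, a fixed pair of columns $(c_1, c_2)$ is ``$i$-good''---i.e., writing $b = c_1+c_2$, $|\supp(b)| \le 1/(8\theta)$, $|b|_{(2)}/|b|_{(1)} \le 1/2$, and the unique largest entry of $|b|$ sits at $i$---with probability $\gtrsim \theta^2$; the subgaussian constant enters here, and one uses $\theta \le c/\sqrt n$ with $c$ a small absolute constant to ensure the typical joint support $\approx 2\theta n$ is safely below $1/(8\theta)$. Since \erspuddctwo enumerates all pairs, it suffices to find an $i$-good pair within any convenient sub-family; I would work with a uniformly random perfect matching of the $p$ columns, which produces $p/2$ independent pairs. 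The expected number of $i$-good pairs is $\gtrsim p\theta^2 \gg \log(n/\delta)$ for $p \gtrsim n\log(n/\delta)$, and Chernoff plus a union bound over $i$ gives the $\delta/4$ bound. The formal version of this calculation is \Lemma{p3}.

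The main obstacle is \textbf{(P1')}, which requires a uniform lower bound on the process $v \mapsto \|v^T X_{\bar J,*}\|_1 - 2\|v^T X_{\bar J, S}\|_1$ over the unit $\ell_1$-ball. Each coordinate of $v^T X_{\bar J,*}$ is a sum of independent subgaussian-times-Bernoulli terms, so Bernstein's inequality gives mixed sub-Gaussian/sub-exponential tails for the single-$v$ deviation, while a Khintchine-Kahane-type small-ball estimate combined with the trivial embedding $\|v\|_2 \ge \|v\|_1/\sqrt{|\bar J|}$ yields $\E \|v^T X_{\bar J,*}\|_1 \gtrsim p\sqrt{\theta/|\bar J|}\|v\|_1$. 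To upgrade to a uniform statement I would apply Talagrand's generic chaining theorem tailored to Bernstein-type increments, using admissible sequences that simultaneously control the $\gamma_2$ functional of $B_1^{|\bar J|}$ in the Bernstein $\ell_2$ metric and the $\gamma_1$ functional in the Bernstein $\ell_\infty$ metric. A Dudley-style sub-exponential entropy integral on $B_1^n$ in $\ell_\infty$ in fact diverges, and this is the source of the logarithmic improvement the paper's introduction advertises. A final union bound over the $\le \binom p2$ pairs $(J,S)$ coming from sums of two columns of $X$---made possible only by the weakening of \textbf{(P1)} to \textbf{(P1')}---gives the required $\delta/4$ bound at $p \gtrsim n\log(n/\delta)$.

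For \textbf{(P2')} the argument of \cite{SpielmanWW12} adapts directly: LP duality reduces uniqueness and $1$-sparseness of the optimum of \Eqsub{secondary-lp'} to a concentration statement for $\|\sigma^T X_{J,*}\|_\infty$ over sign vectors $\sigma$ determined by $(J,b)$, which has failure probability $\exp(-\Omega(\theta p))$ for each fixed $(J,b,\sigma)$; since only the $\le \binom p2$ column-pair supports need be considered, $p \gtrsim \theta^{-1}\log(n/\delta) \le n\log(n/\delta)$ suffices. Combining the four $\delta/4$ estimates proves the theorem.
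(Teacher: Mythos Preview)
Your treatment of \textbf{(P0)}, \textbf{(P1')}, and \textbf{(P2')} is in line with the paper's argument: union bound, generic chaining with Bernstein-type increments for \textbf{(P1')} together with a union bound over the $\le \binom{p}{2}$ relevant $(J,S)$, and for \textbf{(P2')} the reduction to the three concentration estimates of \cite{SpielmanWW12} with a union bound only over the $\binom{p}{2}$ column-pair supports.

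The genuine gap is in your \textbf{(P3)} argument. You propose to restrict to a uniformly random perfect matching of the $p$ columns and then argue that the expected number of $i$-good pairs is $\gtrsim p\theta^2 \gg \log(n/\delta)$. But the hypothesis is $1/n \lesssim \theta \lesssim 1/\sqrt n$, so $p\theta^2 \lesssim p/n \simeq \log(n/\delta)$ at the \emph{upper} end of the range, and at the lower end $\theta \simeq 1/n$ one gets $p\theta^2 \simeq \log(n/\delta)/n \to 0$. The inequality $p\theta^2 \gg \log(n/\delta)$ is therefore false throughout the stated range, and Chernoff gives nothing. This is not an artifact of a loose estimate: a fixed pair is $i$-good only if both columns have $i$ in their support, which already costs $\theta^2$, and restricting to a single matching is exactly the \erspuddc strategy that \Section{bernoulli-rademacher-lower-bound} proves \emph{requires} $p \gtrsim n^{2}$ to succeed. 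In other words, by passing to a random matching you have undone the very modification (DCv2) that makes the theorem true.

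The paper's \Lemma{p3} avoids this by exploiting all $\binom{p}{2}$ pairs in a two-stage way. First it isolates, for each $i$, a set $W_i$ of \emph{columns} (not pairs) that are ``well-separated'' with their large entry at index $i$; a single column lands in $W_i$ with probability $\gtrsim 1/n$, so $|W_i| \gtrsim p/n \gtrsim \log(n/\delta)$ whp. Then it pairs columns \emph{within} $W_i^+$ (those with positive sign at $i$); each such pair has supports intersecting only at $\{i\}$ with constant probability, and since there are $\gtrsim \log(n/\delta)$ such pairs one succeeds whp. The point is that the matching is chosen \emph{after} conditioning on which columns are good for index $i$, which is only possible because DCv2 enumerates all $\binom{p}{2}$ pairs.
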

\begin{proof}
We will show the right hand side of \Eqsub{failure-prob} is at most $C\delta$ for some $C$, then the theorem follows by rescaling $\delta$. We use the union bound.

First, $\Pr(\neg{\bf(P0)}) < \delta$ was already shown, even under the weaker conditions $p\gtrsim n\log n + \theta^{-1}\log(n/\delta)$ and $1/n\lesssim \theta \le 1$, in \cite[Theorem 3]{SpielmanWW12}. We thus do not provide an analysis here.

For \textbf{(P1)}--\textbf{(P3)}, the analyses in \cite{SpielmanWW12} required $p \gg n^2$ for any non-trivially small failure probability. We thus now provide our analyses for \textbf{(P1')}, \textbf{(P2')}, and \textbf{(P3)}. Relaxing the requirement on $p$ for \textbf{(P3)} to hold with high probability required us to switch from \erspuddc to \erspuddctwo.

For \textbf{(P1')}, the analysis is almost identical to the proofs of \cite[Lemma 11]{SpielmanWW12} and \cite[Lemma V.2]{LuhV15} regarding \textbf{(P1)}. We repeat the slightly modified argument here for \textbf{(P1')}. Let $b$ be a particular sum of two columns of $X$. We will show that the condition of \textbf{(P1')} fails to hold for $b$ with probability at most $\delta/p^2$, which implies $\Pr(\neg\textbf{(P1')}) \le \delta$ by a union bound over all $\binom{p}{2}$ such $b$. Let $J, S$ be as in the definition of \textbf{(P1')} above. Define the event $\mathcal{E}_S$ as the event that $|S| < p/4$. Since $\theta n \le c\sqrt{n}$ for some small $c>0$, if $b = X_{*,j_1} + X_{*, j_2}$, it follows that any column index $j\notin\{j_1, j_2\}$ has support intersecting $J$ with probability at most $1/10$ (by making $c$ sufficiently small). Thus $\E|S| < p/10$, implying $\Pr(\neg \mathcal{E}_S) =  \Pr(|S| \ge p/4)$ is at most $\exp(-\Omega(p)) \le \delta/p^2$ by the Chernoff bound and fact that $p \gtrsim \log(p^2/\delta)$.

The definition of $\mathcal{E}_N$ is the following event:
\begin{equation}
\forall v\in\R^{|\bar{J}|},\ \|v^T X_{\bar{J},*}\|_1 - 2\|v^T X_{\bar{J},S}\|_1 > C p\sqrt{\frac{\theta}{|\bar{J}|}}\|v\|_1  \EquationName{event-en}
\end{equation}
for some constant $C$, where $\bar{J}$ denotes $[n]\backslash J$. Note though that $X_{\bar{J},*}$ is itself a matrix of i.i.d.\ Bernoulli-subgaussian entries (except for the two columns $j_1,j_2$, which are both zero). Thus setting $\Pi = X_{\bar{J},*}^T$ and applying \Theorem{main} with our choice of $p$, with probability at least $1 - \delta/p^2$, for all $v\in B_1$,
\begin{equation}
\|v^T X_{\bar{J},*}\|_1 \ge \frac 78 \E \|v^T X_{\bar{J},*}\|_1 = \frac{7p}8\E |v^T (X_{\bar{J},*})_{*,1}| \eqdef \frac {7p}8 \alpha(v) , \EquationName{en1}
\end{equation}
where $(X_{\bar{J},*})_{*,1}$ clumsily denotes the first column of the matrix $X_{\bar{J},*}$. The last inequality follows from \cite[Lemma 16]{SpielmanWW12}. Also, conditioned on $\mathcal{E}_S$, $|S| < p/4$. Let $X'$ be the matrix $X_{\bar{J}, S}$ padded with $p/4 - |S|$ additional columns, each independent of but identically distributed to the columns of $X$. Then, even conditioned on $\mathcal{E}_S$, $X'$ is a $|\bar{J}| \times p/4$ matrix of i.i.d.\ Bernoulli-gaussian entries (except for two columns which are both identically zero, corresponding to $j_1, j_2$). Thus applying \Theorem{main} to $\Pi = (X')^T$, with probability at least $1 - \delta/p^2$, for all $v\in B_1$,
\begin{equation}
\|v^T X'\|_1 \le \frac 32 \E \|v^T X'\|_1 = \frac{3p}8\E |v^T X'_{*,1}| = \frac {3p}8 \alpha(v) . \EquationName{en2}
\end{equation}

Then by combining \Eqsub{en1}, \Eqsub{en2} and scaling by $\|v\|_1$, we see that the left hand side of \Eqsub{event-en} is at least
\begin{equation}
\frac p8 \alpha(v) \gtrsim p \sqrt{\frac{\theta}{|\bar{J}|}}\|v\|_1 ,\EquationName{almost-there}
\end{equation}
with the inequality following from \cite[Lemma 16]{SpielmanWW12}.

We finally now analyze the probability that \textbf{(P2')} holds. It is implied by the proof of \cite[Lemma 12]{SpielmanWW12} that for \textbf{(P2')} to hold, it suffices for the following three equations to hold, where $\mathcal{B} = \{Xe_{j_1} + Xe_{j_2} : 1 \le j_1 < j_2 \le p \}$ is the set of all sums of pairs of columns in $X$:
\begin{align}
&\ \|X\|_{\ell_\infty\rightarrow\ell_\infty} \le (1+\eps) \mu\theta p \EquationName{condp21}\\
\forall j\in[n], &\ \|X_{[n]\backslash\{j\}, \bar{T}_j}\|_{\ell_\infty\rightarrow\ell_\infty} \le \alpha\mu \theta p \EquationName{condp22}\\
\forall J\subseteq [n] \text{ s.t. } \exists b\in\mathcal{B},\ J = \supp(b),&\ \|X_{j, \Omega_{J,j}}\|_1 \ge \beta \mu\theta p .\EquationName{condp23}
\end{align}
for some particular constants $\eps, \alpha, \beta > 0$ (specifically \cite{SpielmanWW12} pick $\beta = 7/8$, $\alpha = \eps = 1/8$). Also, for $X_{i,j} = \chi_{i,j} g_{i,j}$, $\mu$ denotes the constant $\E|g_{i,j}|$. Here $\|M\|_{\ell_p\rightarrow\ell_p}$ is the $\ell_p$ to $\ell_p$ operator norm, which in the case of $p = \infty$ is simply the largest $\ell_1$ norm of any row of $M$. Furthermore, $T_j = \{i : X_{j,i} = 0\}$, and
$$
\Omega_{J, j} = \{\ell : X_{j,\ell}\neq 0\text{ and } X_{j', \ell} = 0,\ \forall\ j'\in J\backslash\{j\}\} .
$$
It is already shown in \cite[Lemma 18]{SpielmanWW12} that \Eqsub{condp21} and \Eqsub{condp22} fail to hold with probability at most $Cn\exp(-c \theta p) < \delta$, by choice of $p$, where the constant $c$ depends on $\alpha, \eps$. For condition \Eqsub{condp23}, the proof of \cite[Lemma 12]{SpielmanWW12} shows that for any fixed $J$ as in \Eqsub{condp23} and $j\in J$ (see Eqn.~(56) of \cite{SpielmanWW12}),
$$
\Pr(\|X_{j, \Omega_{J,j}}\|_1 \le \beta \mu\theta p) \le 4 \exp\left(-\frac{c\theta p}{256}\right)
$$
Thus by a union bound over all $b\in \mathcal{B}$, \Eqsub{condp23} holds with probability at least $1-\delta$.

We upper bound $\Pr(\neg\textbf{(P3)})$ separately in \Lemma{p3}.
\end{proof}

\begin{remark}\RemarkName{exponential}\textup{
The work \cite{LuhV15} showed a weaker version of \Theorem{main} in which $p$ was required to be $\Omega(n\log^4 n)$, and where the failure probability was shown to be some non-explicit value $\delta = o(1)$. \cite{LuhV15} then claimed that this was sufficient to show that \erspuddc was correct with probability $1-o(1)$. Unfortunately, it appears there were a few gaps in their analysis. First, \cite{LuhV15} relied on conditions \textbf{(P2)} and \textbf{(P3)} from \cite{SpielmanWW12} (as rewritten above) both holding, but the only known probabilistic analyses of these conditions, given in \cite{SpielmanWW12}, required $p\gg n^2$. Secondly, the proof sketch of \cite[Lemma V.2]{LuhV15} showing that the condition of \Eqsub{random-process} suffices to imply \textbf{(P1)} actually invoked the inequality
$$
\sup_{v\in B_1} \left| \| \Pi v \|_1 - \E \|\Pi v\|_1 \right| \le \varepsilon \cdot \E \| \Pi v \|_1
$$
for at least $\binom{n}{q}$ choices of $\Pi$ where $s = 1/(8\theta)$ (specifically $\Pi = X_{*, S}$ for $\binom{n}{q}$ choices of $S$). Thus to apply a probabilistic inequality of the form \Eqsub{random-process} to imply \textbf{(P1)}, one actually needs a specific $\delta$ and not just $\delta=o(1)$; in particular one needs $\delta \ll 1/\binom{n}{q}$ to union bound over all $S$, which for the largest value of $\theta \simeq 1/\sqrt{n}$, means one needs $\delta \ll \exp(-C \sqrt{n}\log n)$. Thus even if \Eqsub{random-process} held for $p \gtrsim n\log(1/\delta)$, one would still need $p\gtrsim n^{3/2}\log n$ for the analysis there to imply that \textbf{(P1)} holds with positive probability.
}
\end{remark}

We are now going to show that with probability $1-\delta$ condition {\bf (P3)} holds. Let us present an intuition behind the proof before we delve into technical details. 

Consider the special case that $X_{ij} = b_{ij} g_{ij}$ with Bernoulli random variable $b_{ij}$ and independent \emph{continuous} subgaussian random variable $g_{ij}$. In such a case there would exist some fixed threshold $t_0$, such that $\P(|X_{ij}| > t_0) = \frac{1}{n}$ --- it would mean that a constant fraction of columns would have \emph{unique} entry larger than this threshold. For a single index $i \in [n]$ we would expect that at least $C \frac{p}{n} > \log \frac{n}{\delta}$ columns have a unique entry larger than $t_0$ and such that this entry has index $i$. Let us focus on this set of columns. If supports of any two such columns had common intersection exactly equal to $\{i\}$ --- and if the sign on this $i$-th coordinate were matching, then in fact sum of those two columns would exhibit a factor two gap between the largest and the second largest entry, with largest entry being on the $i$-th position --- indeed, entry on position $i$ would have magnitude larger than $2t_0$, whereas all other entries are at most $t_0$ in absolute value. We can expect to find such a pair with probability $1-\frac{\delta}{n}$, as all columns are expected to be $\Oh(\sqrt{n})$ sparse --- therefore for a fixed pair containing $\{i\}$, their supports would intersect on exactly $\{i\}$ with constant probability. We then prove that there exist such a pair with probability at least $\frac{\delta}{n}$ for every fixed $i$, and hence by union bound property {\bf(P3)} holds with probability $\delta$.

In the actual proof we do not assume that $g_{ij}$ is continuous, and hence a threshold $t_0$ for which $\P(|X_{ij}| > t_0) = \frac{1}{n}$ might not exist, and the proof is slightly more complicated, but it follows the same general intuition.

\begin{lemma}\LemmaName{p3}
    Let $X \in \R^{n\times p}$ be a Bernoulli-Subgaussian matrix with $\theta = \Oh(\frac{1}{\sqrt{n}})$. If $p = \Omega(n \log \frac{n}{\delta})$, then with probability at least $1-\delta$ condition {\bf(P3)} holds.
\end{lemma}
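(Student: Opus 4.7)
The plan is to follow the intuition preceding the lemma, selecting for each $i \in [n]$ a random set $T_i$ of columns whose $i$-th entry is ``large and positive'' while all other entries are ``small,'' and then locating inside $T_i$ a pair of columns whose supports overlap only at $i$; such a pair will witness \textbf{(P3)} at index $i$. To cover continuous and atomic subgaussian distributions with a single argument (Bernoulli-Rademacher is the extreme atomic case), I choose the threshold
\[
t_0 := \inf\{\, t \ge 0 \,:\, \Pr(|X_{1,1}| > t) \le 1/n \,\},
\]
which satisfies $\Pr(|X_{1,1}| > t_0) \le 1/n$ and $\Pr(|X_{1,1}| \ge t_0) \ge 1/n$ (the possible gap being an atom at $t_0$; for Rademacher, $t_0$ equals the essential supremum of $|g|$). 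Set
\[
T_i := \{\, j \in [p] \,:\, X_{i,j} \ge t_0,\ X_{i,j} > 0,\ |X_{i',j}| \le t_0 \text{ for all } i' \ne i\,\},
\]
where the explicit ``$X_{i,j} > 0$'' clause matters only in the edge case $t_0 = 0$ (excluding the all-zero column). By symmetry of $g_{i,j}$, $\Pr(X_{i,j} \ge t_0,\ X_{i,j} > 0) \ge \tfrac12 \Pr(|X_{i,j}| \ge t_0) \ge 1/(2n)$, so combining with independence across coordinates gives $\Pr(j \in T_i) \ge (1/(2n))(1 - 1/n)^{n-1} \ge 1/(2en)$.

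Since $|T_i|$ is a sum of $p$ i.i.d.\ indicators with mean at least $p/(2en)$, a Chernoff bound gives $|T_i| \ge c_3 \log(n/\delta)$ with probability at least $1 - \delta/(2n)$, for $c_3$ as large as desired once the hidden constant in $p \gtrsim n\log(n/\delta)$ is sufficiently large.

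The key step is to extract a ``good pair'' inside $T_i$. Because the columns of $X$ are independent and $\{j \in T_i\}$ depends only on the $j$-th column, conditioning on $T_i = S$ leaves $\{X_{\cdot,j}\}_{j \in S}$ i.i.d.\ with law $\mathcal{L}$ equal to the distribution of a column of $X$ conditioned on the defining event of $T_i$; under $\mathcal{L}$ the entries at distinct $i' \ne i$ remain independent and satisfy $\Pr(\chi_{i',j} = 1) \le \theta$. I pair the columns of $S$ in index order into $\lfloor |S|/2\rfloor$ disjoint (hence mutually independent) pairs. For a single pair $(j_1,j_2)$, a union bound over $i' \ne i$ using $\Pr(\chi_{i',j_1}\chi_{i',j_2} = 1) \le \theta^2$ and the hypothesis $\theta = \Oh(1/\sqrt{n})$ with a sufficiently small hidden constant shows that the supports of $X_{\cdot,j_1}, X_{\cdot,j_2}$ intersect only at $\{i\}$ with probability at least $(1-\theta^2)^{n-1} \ge 1 - n\theta^2 \ge 3/4$; Chernoff also gives that each column's support has size at most $1/(16\theta)$ except on a $1/\mathrm{poly}(n)$ event. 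When both hold, the vector $b := Xe_{j_1}+Xe_{j_2}$ satisfies $b_i \ge 2t_0 > 0$, $|b_{i'}| \le t_0$ for every $i' \ne i$ (disjoint supports outside $i$ reduce $b_{i'}$ to a single entry from one column, which by the definition of $T_i$ has absolute value at most $t_0$), and $|\supp(b)| \le 1/(8\theta)$, so the pair witnesses \textbf{(P3)} at $i$. Each pair is thus good with probability at least $1/2$, and the probability no pair is good is at most $2^{-\lfloor |S|/2 \rfloor} \le \delta/(2n)$ once $|S| \gtrsim \log(n/\delta)$. A union bound over $i \in [n]$ then yields $\Pr(\neg \textbf{(P3)}) \le \delta$.

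The main obstacle is the unified choice of $t_0$ across continuous and atomic distributions: for continuous $g$, $\Pr(|X_{1,1}| > t_0) = 1/n$ can be achieved exactly, but for atomic distributions (Rademacher being extremal, where $t_0$ coincides with the essential supremum of $|g|$) the tail function has jumps, and the combination of ``$\ge t_0$'' at $i$ (to include the atom) with ``$X_{i,j} > 0$'' (to avoid the trivial all-zero column in the corner case $t_0 = 0$) is what lets a single argument cover both regimes. Once the threshold is fixed, the remaining steps --- Chernoff concentration of $|T_i|$, independence of disjoint pairs inside $T_i$, and the final union bound over $i$ --- are routine.
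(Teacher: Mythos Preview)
Your argument is correct and follows essentially the same strategy as the paper's proof: define the threshold $t_0$, collect for each $i$ the columns whose $i$-th entry is at least $t_0$ and positive while all other entries are at most $t_0$ in absolute value, Chernoff-bound this set to size $\gtrsim \log(n/\delta)$, then pair its elements to find two whose supports meet only at $\{i\}$. Your presentation is somewhat more streamlined than the paper's (you define $T_i$ directly rather than passing through the intermediate ``well-separated'' notion plus a case split on $\Pr(|X_{ij}|>t_0)$), and the one minor slip --- under the conditional law $\mathcal{L}$ one has $\Pr(\chi_{i',j}=1)\le \theta/(1-1/n)$ rather than $\le \theta$ --- is harmless since the factor is $1+o(1)$.
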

\begin{proof}
	Take $t_0 := \inf \{ t : \P(|X_{ij}| > t) < \frac{1}{n} \}$. Observe that  $\P(|X_{ij}| > t_0) \leq \frac{1}{n} \leq \P(|X_{ij}| \geq t_0)$.

	Let us define $y_k$ to be the $k$-th column of $X$. Let $l_k$ be the number of coordinates of $y_k$ strictly larger than $t_0$, and $s_k$ be the number of coordinates of $y_k$ of size larger or equal to $t_0$. We will say that column $k$ is well-separated if $l_k \leq 1 \leq s_k$.

	We claim that for fixed $k$, the probability that $y_k$ is well-separated is bounded away from zero.  Clearly probability that a column is not well-separated is equal to $\P(l_k > 1) + \P(s_k = 0)$. Let us consider two cases. If $\P(|X_{ij}| > t_0)  < \frac{1}{5n}$, then $\E l_k \leq \frac{1}{5}$ and $\P(l_k > 1) \leq \frac{1}{5}$ by Markov inequality. Moreover
	\begin{equation*}
		\P(s_k = 0) = 1 - \prod_j (1 - \P(X_{j,k} \geq t_0)) \leq 1 - \left(1 - \frac{1}{n}\right)^n \leq 1 - e^{-1}\left(1-\frac{1}{n}\right)
	\end{equation*}

	And finally probability that a single column is not well-separated is at most $1 - e^{-1}(1-\frac{1}{n}) + \frac{1}{5}$, which is at most $0.9$ for sufficiently large $n$.

	On the other hand, if $\P(|X_{ij}| > t_0) \geq \frac{1}{5n}$, we will prove that with probability bounded away from zero, there is exactly one entry of a column $y$ that is greater than $t_0$. 

	Indeed, if $\alpha := \P(|X_{ij}| > t_0)n$, we have
	\begin{align*}
		\P(l_k = 1) & = n\frac{\alpha}{n}(1-\frac{\alpha}{n})^{n-1} \\
		& = (1 - \frac{\alpha}{n})^{-1}\alpha(1 - \frac{\alpha}{n})^{n} \\
		& \geq \alpha \left(1-\frac{\alpha}{n}\right)^{-1} \left(1 - \frac{\alpha^2}{n}\right) e^{-\alpha}
	\end{align*}

	We know that $\alpha \in [\frac{1}{5}, 1]$, and therefore the last expression is bounded away from zero, for large enough $n$.

    Let $C \in (0, 1)$ be a constant, such that the probability for a column to be well-separated is at least $C$. We will prove now that with $\theta = \Oh(1/\sqrt{n})$ (where constant hidden in $\Oh$ notation depends on $C$), the probability that a column has support of size larger than $\frac{\sqrt{n}}{4}$ is at most $\frac{C}{2}$.

	Indeed, let $S_k \subset[n]$ be the support of the $k$-th column of $X$. We can assume that $\theta$ is such that $n \theta <  \frac{C\sqrt{n}}{8}$, so that $\E |S_k| = n\theta < \frac{C\sqrt{n}}{8}$. Now by Markov inequality $\P(|S_k| > \frac{\sqrt{n}}{4}) = \P(|S_k| > \frac{\E |S_k|}{C/2})< C/2$.

	Now, by union bound, any fixed column $k$ of matrix $X$ simultaneously is well-separated and have support of size at most $\frac{\sqrt{n}}{4}$ with probability at least $\frac{C}{2}$. Let us define a set $W \subset [p]$, such that $k \in W$ if and only if column $X_{*,k}$ is well-separated and $|\supp(X_{*,k})| < \frac{\sqrt{n}}{4}$. 

	Fix some index $i \in [n]$. We wish to prove that with probability at least $1 - \frac{1}{n\delta}$ there exist a pair of columns $j, k$ such that for $b = X_{*,j} + X_{*,k}$, we have $b_i > 2b_l$ for all $l\not=i$, and moreover $|\supp(b)| < \frac{\sqrt{n}}{2}$. Indeed, let $W_i \subset W$, be a set of those indices $k\in W$, such that $X_{i, k} \geq t_0$. Observe that for a fixed $k \in [p]$, we have $\P(k \in W_i) = \P(k \in W) \P(k \in W_i | k \in W) \geq \frac{C}{2} \cdot \frac{1}{n}$. That is $\E |W_i| \geq \frac{Cp}{2n}$, and by Chernoff bound $\P(W_i < \E |W_i|/2) < \exp(-\Oh(\frac{p}{n})) = \frac{\delta}{2n}$ for $p =\Omega(n \log \frac{n}{\delta})$.

    Let us condition on the event that $|W_i| > \frac{Cp}{4n}$. We take $W^+_i = \{ k \in W_i : X_{i,k} > 0\}$, and let us assume without loss of generality that $|W^+_i| \geq \frac{1}{2} |W_i|$ (otherwise we can use a symmetric argument on the complement --- that is, on the set of columns for which $X_{i,k}$ is negative). We wish to prove that with probability at least $1 - \frac{\delta}{2n}$ there exist a pair of indices $j_1, j_2 \in W^{+}_i$, such that $S_{j_1} \cap S_{j_2} = \{i\}$.

    Observe that for $k \in [p]$, $S_k \setminus \{i\}$ conditioned on the fact that $k \in W^{+}_i$ and $|W_i| \geq \frac{Cp}{4n}$ has distribution supported on sets of size at most $\frac{\sqrt{n}}{4}$ and permutationally invariant. Therefore a pair of two independent such sets is disjoint with probability bounded at least $1 - n \left(\frac{\sqrt{n}}{4}\right)^2 = \frac{15}{16}$. Consider an arbitrary pairing of elements in $W^{+}_i$. We have $\lfloor \frac{|W_i|}{2}\rfloor$ pairs of indices $(k_1^j, k_2^j)$, which is at least $\frac{Cp}{8n} = \Omega(\log \frac{n}{\delta})$, and therefore with probability at least $1 - \frac{\delta}{2n}$ there is a pair such that $S_{k_1^j} \cap S_{k_2^j} = \{i\}$. Take $b := X_{*, k_1^j} +X_{*, k_2^j}$; we have $b_i \geq 2 t_0$ because both $k_1^j, k_2^j \in W_i^+$. On the other hand for $i'\not=i$ only one of $X_{i', k_1^j}, X_{i', k_2^j}$ is nonzero, and if it is --- it is at most $t_0$ (again by definition of $W_i$). Hence $|b_{i'}| \leq t_0$ as expected. Moreover, the size of support of $b$ is at most $\frac{\sqrt{n}}{2}$, and by assumption that $\theta = \Oh(\frac{1}{\sqrt{n}})$, this value is at most $\frac{1}{8 \theta}$

    Finally, by union bound over all $i \in [n]$, the statement of the lemma holds --- for fixed $i$ with probability at most $\frac{\delta}{2n}$ set $W_i$ fails to be large enough, and conditioned on this set being large, with probability at most $\frac{\delta}{2n}$ it fails to contain two elements of interests.

\end{proof}

\begin{remark}\RemarkName{column-pair}
\textup{
If one uses \erspuddc and not \erspuddctwo, then condition {\bf(P3)} actually {\em requires} $p\gg n^2$ to hold with non-negligible probability. To see this, we first describe the difference between \erspuddctwo and the \erspuddc algorithm of \cite{SpielmanWW12}. In \erspuddc, rather than try all $T = \binom{p}{2}$ pairings of columns as in \erspuddctwo, it only tries $p/2$ column pairs formed by randomly pairing the $p$ columns with each other.
}

\textup{
Now, consider the case of sparsity $s = 1$, i.e.\ $\theta = 1/n$, with $p\gtrsim n\log n$. Then for $i\in [n]$ if we let $q_i$ be the number of columns of $X$ with support containing $i$, then by the Chernoff bound whp for all $i$ we have $q_i = \Theta(p/n)$. Fix an $i$ and consider the $q_i$ columns $j_1,\ldots,j_{q_i}$ containing $i$ in their support. Note that the expected number of these $q_i$ columns that are randomly paired with another one of the same $q_i$ columns by \erspuddc is $q_i (q_i-1)/(p-1) = \Theta(p/n^2)$, which is $o(1)$ for $p = o(n^2)$ so that {\bf (P3)} is likely to fail. In fact, essentially this same argument shows that unless $p = \Omega(n^2\log n)$, it is likely that there will be {\em some} $i\in[n]$ such that none of the $q_i$ columns containing $i$ in its support will be paired with each other. In \Section{bernoulli-rademacher-lower-bound} we show that not only does {\bf (P3)} fail whp for $p = o(n^2)$, but in fact \erspud itself fails for $p \ll n^2$.
}
\end{remark}
\section{Concentration and chaining background}

We now provide some preliminary definitions and results we will need to prove our chaining theorem, \Theorem{main}. As per \Corollary{correctness} and the proof of \Theorem{prob-analysis}, \Theorem{main} fits in to show that \erspud achieves correct recovery with probability $1-\delta$ for $p\gtrsim n\log(n/\delta)$ and $1/n\lesssim \theta \lesssim 1/\sqrt{n}$.

\subsection{Tail bounds}

For a random variable $Z$, we make the standard definition $\psi_Z(\lambda) = \ln \E e^{\lambda Z}$ (e.g.\ \cite[Section 2.2]{Boucheron13}). The following lemma is then immediate.

\begin{lemma}\LemmaName{psi-independent}
If $Z, Z'$ are independent, then $\psi_{Z + Z'}(\lambda) = \psi_Z(\lambda) + \psi_{Z'}(\lambda)$.
\end{lemma}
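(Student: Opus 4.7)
The plan is to unpack the definition of $\psi$ and use the multiplicativity of expectation under independence. By definition $\psi_{Z+Z'}(\lambda) = \ln \E e^{\lambda(Z+Z')}$, and the exponent factors as $e^{\lambda(Z+Z')} = e^{\lambda Z}\cdot e^{\lambda Z'}$. Since $Z$ and $Z'$ are independent, so are the random variables $e^{\lambda Z}$ and $e^{\lambda Z'}$ (as measurable functions of independent random variables remain independent), hence $\E[e^{\lambda Z}\cdot e^{\lambda Z'}] = \E e^{\lambda Z}\cdot \E e^{\lambda Z'}$.

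Taking the logarithm of both sides, and using $\ln(ab) = \ln a + \ln b$ for positive $a, b$ (both expectations here are positive since $e^{\lambda Z}, e^{\lambda Z'} > 0$ almost surely), yields
\[
\psi_{Z+Z'}(\lambda) = \ln \E e^{\lambda Z} + \ln \E e^{\lambda Z'} = \psi_Z(\lambda) + \psi_{Z'}(\lambda).
\]

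There is essentially no obstacle here; the only mild subtlety is to note that the identity should be read in the extended real line, so that it holds even when some of the moment generating functions are infinite (if $\E e^{\lambda Z} = +\infty$ then $\psi_Z(\lambda) = +\infty$, and likewise the product is $+\infty$, making both sides equal to $+\infty$). Since later applications of this lemma will use values of $\lambda$ in a region where the cumulant generating functions are finite (coming from subgaussian tail bounds), this technicality will not matter in practice, so the proof proposal reduces to the two-line computation above.
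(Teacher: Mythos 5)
Your proof is correct and is exactly the standard argument the paper has in mind; the paper simply states the lemma as ``immediate'' after defining $\psi_Z(\lambda) = \ln \E e^{\lambda Z}$ and gives no written proof. Your extra remark about reading the identity in the extended reals is a harmless and reasonable clarification.
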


The following definition and facts concerning subgamma random variables are standard \cite[Section 2.4]{Boucheron13}. 

\begin{definition}\DefinitionName{subgamma}
A random variable $Z$ is said to be {\em $(\sigma, B)$-subgamma} if $\E Z = 0$ and $\psi_Z(\lambda) \le \lambda^2 \sigma^2/(2 (1 - B\lambda))$ for all $|\lambda| < 1/|B|$.
\end{definition}

\begin{lemma}[Basic properties of subgamma random variables]\LemmaName{subgamma}
It holds that
	\begin{enumerate}
		\item If $Z$ is $(\sigma, B)$-subgamma and $\alpha > 0$, then $\alpha Z$ is $(\alpha \sigma, \alpha B)$-subgamma.
		\item If $Z_1,\ldots,Z_n$ are independent and each $Z_i$ is $(\sigma_i, B_i)$-subgamma, then $\sum_{i=1}^n Z_i$ is $(\sqrt{\sum_i \sigma_i^2}, \min_i B_i)$-subgamma.
		\item If $Z$ is $(\sigma, B)$-subgamma, then
			\begin{equation*}
				\P(|Z| > \lambda) \lesssim \exp\left(-\frac{\lambda^2}{2\sigma^2}\right) + \exp\left(-\frac{\lambda}{2 B}\right)
			\end{equation*}
		\item If $Y$ is a symmetric $(\sigma, B)$-subgamma random variable, and $Z$ is symmetric such that $|Z| \leq |Y|$ with probability $1$, then $Z$ is $(\sigma, B)$-subgamma.
	\end{enumerate}
\end{lemma}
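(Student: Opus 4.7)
The plan is to verify each of the four parts directly from the definition of $(\sigma,B)$-subgamma and Lemma~\ref{lem:psi-independent}, by manipulating the cumulant generating function $\psi_Z(\lambda)=\ln\E e^{\lambda Z}$. Parts (1), (2), and (4) are essentially one-line computations, while part (3) is a standard Chernoff argument with a case split that I would carry out last.

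For part (1), I would observe $\psi_{\alpha Z}(\lambda)=\psi_Z(\alpha\lambda)$, so whenever $|\lambda|<1/|\alpha B|$ (equivalently $|\alpha\lambda|<1/|B|$) the subgamma bound for $Z$ gives $\psi_{\alpha Z}(\lambda)\le (\alpha\lambda)^2\sigma^2/(2(1-B(\alpha\lambda)))=\lambda^2(\alpha\sigma)^2/(2(1-(\alpha B)\lambda))$, and $\E[\alpha Z]=0$ is immediate. For part (2), Lemma~\ref{lem:psi-independent} gives $\psi_{\sum_i Z_i}(\lambda)=\sum_i \psi_{Z_i}(\lambda)$ on the common window $|\lambda|<1/\max_i|B_i|$; since $1/(1-B_i\lambda)\le 1/(1-B\lambda)$ whenever $B\ge B_i$ (for $\lambda\ge 0$), taking $B=\max_i B_i$ yields $\psi_{\sum_i Z_i}(\lambda)\le \lambda^2(\sum_i \sigma_i^2)/(2(1-B\lambda))$. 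I read the ``$\min_i B_i$'' in the statement as a typo for ``$\max_i B_i$,'' since only the latter bound actually follows from comparing denominators term by term. For part (4), symmetry of $Z$ gives $\E e^{\lambda Z}=\E\cosh(\lambda Z)=\E\cosh(\lambda|Z|)$; because $\cosh$ is increasing on $[0,\infty)$ and $|Z|\le|Y|$ almost surely, this is at most $\E\cosh(\lambda|Y|)=\E e^{\lambda Y}$ by symmetry of $Y$, so $\psi_Z(\lambda)\le\psi_Y(\lambda)\le\lambda^2\sigma^2/(2(1-B\lambda))$ on the same window, and $\E Z=0$ is immediate from symmetry.

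For part (3), I would apply Markov to $e^{tZ}$ with $t\in(0,1/B)$ to obtain $\P(Z>\lambda)\le \exp(-t\lambda+t^2\sigma^2/(2(1-Bt)))$, and handle the lower tail analogously via $\E e^{-tZ}$ (the subgamma hypothesis controls $\psi_Z$ on both sides of the origin, and the bound at $-t$ is even tighter since $1-B(-t)\ge 1$). The substitution $t=\lambda/(\sigma^2+B\lambda)\in(0,1/B)$ algebraically simplifies the exponent to $-\lambda^2/(2(\sigma^2+B\lambda))$. A case split on whether $B\lambda\le\sigma^2$ or not then bounds $\sigma^2+B\lambda$ by either $2\sigma^2$ or $2B\lambda$, yielding the Gaussian-like term $\exp(-\lambda^2/(4\sigma^2))$ in the first regime and the sub-exponential term $\exp(-\lambda/(4B))$ in the second. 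Summing these, and absorbing the constant factor in the exponent into the hidden constant of $\lesssim$, gives the stated two-term bound. The only mildly delicate step is the Chernoff optimization and case split in part (3); everything else is routine bookkeeping directly from the definition.
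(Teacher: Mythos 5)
The paper does not actually prove this lemma: it is quoted as a standard fact, with a pointer to Section~2.4 of the Boucheron--Lugosi--Massart book, so there is no in-paper argument to compare yours against. Your proof is the standard one and is essentially correct. Two remarks. First, you are right that ``$\min_i B_i$'' in part~(2) should be ``$\max_i B_i$'': a larger scale parameter is the weaker property, the term-by-term comparison of the denominators $1-B_i\lambda$ only goes through with the maximum, and the paper's own application of the lemma in \Lemma{distance} --- where the scale parameter of $\inprod{\pi_i,u-v}$ comes out as $\|u-v\|_\infty$, i.e.\ the \emph{maximum} of the $|u_j-v_j|$ --- confirms that the maximum is what is intended. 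Second, in part~(3) your Chernoff computation with $t=\lambda/(\sigma^2+B\lambda)$ correctly yields $\exp(-\lambda^2/(2(\sigma^2+B\lambda)))\le \exp(-\lambda^2/(4\sigma^2))+\exp(-\lambda/(4B))$, but the closing step of ``absorbing the constant factor in the exponent into the hidden constant of $\lesssim$'' is not valid: $\lesssim$ hides a multiplicative constant, and $\exp(-\lambda^2/(4\sigma^2))$ is not bounded by a constant times $\exp(-\lambda^2/(2\sigma^2))$ uniformly in $\lambda$. In fact the stated constants are unattainable in general (a centered gamma variable, which is $(\sqrt{n},1)$-subgamma, violates the bound with constants $2$ near the crossover $\lambda\approx\sigma^2/B$). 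This is a defect of the statement rather than of your argument --- the provable conclusion simply has slightly worse constants in the exponents --- and it is harmless downstream, since \Theorem{chaining-tails} only requires the tail bound up to rescaling the metrics $d_1,d_2$ by constant factors. Parts~(1) and~(4) and the symmetrization via $\cosh$ are correct as written.
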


\begin{remark}\textup{
The work \cite{LuhV15}, in addition to providing a bound on $p$ sufficient for \Eqsub{random-process} to hold for some $\delta = o(1)$, also advertised a new ``refined version of Bernstein’s concentration inequality for a sum of independent variables''. In fact though, their concentration inequality is equivalent to the statement that a sum of subgamma random variables is subgamma with new parameters as per above, which is a known fact (the reader is encouraged to read the excellent treatment of sums of independent random variables in \cite[Section 2]{Boucheron13}, from which the above definitions and lemmas are taken).
}
\end{remark}

\begin{definition} \DefinitionName{subgaussian}
	A random variable $Z$ is said to be {\em $\sigma$-subgaussian} if $\E Z = 0$ and $\psi_Z(\lambda) \leq \lambda^2 \sigma^2/2$. It is called simply subgaussian if it is 1-subgaussian.
\end{definition}
\begin{fact}
	If $Z$ is a subgaussian random variable, and $D \in \{0,1\}$ is Bernoulli random variable, independent from $Z$ with $\E D = \theta$, then $ZD$ is $(\sqrt{2 \theta}, 1)$-subgamma. \end{fact}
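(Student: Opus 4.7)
The plan is to compute $\E e^{\lambda ZD}$ by conditioning on $D$, then bound the cumulant generating function $\psi_{ZD}$ and match it to the subgamma form of \Definition{subgamma}. Since $Z$ is subgaussian it is mean-zero, and independence of $Z$ and $D$ gives $\E[ZD]=\E Z\cdot \E D=0$, establishing the centering condition. For the MGF, conditioning on the value of the Bernoulli $D$ yields
\begin{equation*}
\E e^{\lambda ZD} = (1-\theta)\cdot 1 + \theta\cdot \E e^{\lambda Z}.
\end{equation*}

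Next I would invoke subgaussianity of $Z$ via \Definition{subgaussian} in the form $\E e^{\lambda Z}\le e^{\lambda^2/2}$, combined with the elementary bound $\ln(1+x)\le x$, to obtain
\begin{equation*}
\psi_{ZD}(\lambda) \;\le\; \ln\bigl(1+\theta(e^{\lambda^2/2}-1)\bigr) \;\le\; \theta\bigl(e^{\lambda^2/2}-1\bigr).
\end{equation*}
It then remains to show $\theta(e^{\lambda^2/2}-1)\le \theta\lambda^2/(1-\lambda)$, which is precisely the subgamma bound with $\sigma^2=2\theta$ and $B=1$.

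For this analytic step I would use the term-by-term Taylor comparison
\begin{equation*}
e^y-1 \;=\; \sum_{k\ge 1}\frac{y^k}{k!} \;\le\; \sum_{k\ge 1} y^k \;=\; \frac{y}{1-y},
\end{equation*}
valid for $y\in[0,1)$. Substituting $y=\lambda^2/2\in[0,1/2)$ (legal since $|\lambda|<1$) gives $e^{\lambda^2/2}-1\le \lambda^2/(2-\lambda^2)$, and I would finish by noting that $\lambda^2/(2-\lambda^2)\le \lambda^2/(1-\lambda)$ is equivalent to $1+\lambda-\lambda^2\ge 0$, which is easy to check on $[0,1)$. Chaining these pieces yields $\psi_{ZD}(\lambda)\le (2\theta)\lambda^2/(2(1-\lambda))$, confirming that $ZD$ is $(\sqrt{2\theta},1)$-subgamma.

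The only mild obstacle is this last analytic inequality, but the two-step route through $\lambda^2/(2-\lambda^2)$ keeps it clean. Everything else is a direct calculation using independence, the defining MGF bound of a subgaussian, and the standard $\ln(1+x)\le x$ trick, so the overall proof should be only a few lines.
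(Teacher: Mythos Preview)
Your proof is correct and follows essentially the same route as the paper: condition on $D$, apply the subgaussian MGF bound $\E e^{\lambda Z}\le e^{\lambda^2/2}$, and then use an elementary Taylor-series estimate to reach the subgamma form. The paper organizes the last step slightly differently---it bounds $1-\theta+\theta e^{\lambda^2/2}\le 1+\theta\lambda^2\le e^{\theta\lambda^2}\le e^{\theta\lambda^2/(1-\lambda)}$ directly rather than passing through $\ln(1+x)\le x$ and $e^y-1\le y/(1-y)$---but the two arguments are minor variants of one another (and, like yours, the paper's final inequality is only checked for $\lambda\in[0,1)$).
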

\begin{proof}
Take $\lambda < 1$. We have
\begin{align*}
	\exp(\psi_{D Z}(\lambda)) = \E \exp(D Z \lambda) = 1 - \theta + \theta \E \exp(Z \lambda) \leq 1 - \theta + \theta \exp(\lambda^2/2)
\end{align*}
We can expand $\exp(\lambda^2/2)$, to get
\begin{align*}
	\exp(\psi_{D Z}(\lambda)) & \leq 1 - \theta + \theta \exp(\lambda^2/2) \\
	& = 1 - \theta + \theta \left(\sum_{k=0}^\infty \frac{\lambda^{2k}}{2^k k!}\right) \\
	& = 1 + \frac{\theta \lambda^2}{2} + \frac{\theta \lambda^2}{4} \left( \sum_{k=2}^{\infty} \frac{\lambda^{2k - 2}}{2^{k-2} k!}\right) \\
	& \leq 1 + \frac{\lambda^2 \theta}{2} + \frac{\lambda^2 \theta}{4} \left(\sum_{k=0}^{\infty} 2^{-k}\right) \\
	& = 1 + \lambda^2 \theta \\
	& \leq \exp(\lambda^2 \theta) \\
	& \leq \exp(\frac{\lambda^2 \theta}{1 - \lambda})
\end{align*}

Taking logarithms on both sides proves the result.
\end{proof}
\begin{lemma}[Moment and tail bounds equivalence, Lemma 4.10 \cite{LedouxT91}]
	\LemmaName{moment-tail-bounds}
  Let $Z$ be a nonnegative random variable.
  \begin{enumerate}
  \item If there exists some constants $m, s, b$, such that for every $\lambda_1, \lambda_2$
    \begin{equation}
      \P(Z > m + \lambda_1 s + \lambda_2 b) < \exp(-\lambda_1^2) + \exp(-\lambda_2) ,
      \label{}
    \end{equation}
    
    then for all $p\ge 1$ it holds that $\|Z\|_p \leq C (m + \sqrt{p} s + p b)$, where $C$ is a universal constant.
  \item If for some constants $s,b$ and for all $p\ge 1$ it holds that $\|Z\|_p \leq m + \sqrt{p} s + p b$, then
    \begin{equation}
      \P(Z > C(m + \lambda_1 s + \lambda_2 b)) < \exp(-\lambda_1^2) + \exp(-\lambda_2)
      \label{}
    \end{equation}
    where $C$ is a universal constant.
  \end{enumerate}
\end{lemma}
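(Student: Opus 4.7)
The plan is to use the layer-cake identity $\E Z^p = \int_0^\infty p t^{p-1} \P(Z > t)\,dt$ for direction (1), and Markov's inequality applied at a carefully chosen exponent $p$ for direction (2). Both halves reduce to routine calculus once the right reparametrization is in place.

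For direction (1), I would first collapse the two-parameter tail hypothesis into a single-variable form: for any $u > 0$, choosing $\lambda_1 = u/(2s)$ and $\lambda_2 = u/(2b)$ in the hypothesis yields
\[
\P(Z > m + u) < \exp\!\bigl(-u^2/(4s^2)\bigr) + \exp\!\bigl(-u/(2b)\bigr).
\]
Setting $Y := (Z - m)_+$, so that $Z \leq m + Y$ pointwise and hence $\|Z\|_p \leq m + \|Y\|_p$ by Minkowski, I would then bound
\[
\E Y^p = \int_0^\infty p u^{p-1}\P(Y > u)\,du \leq p\int_0^\infty u^{p-1}e^{-u^2/(4s^2)}\,du + p\int_0^\infty u^{p-1}e^{-u/(2b)}\,du .
\]
The substitutions $v = u^2/(4s^2)$ and $v = u/(2b)$ turn these into Gamma integrals, producing $p\cdot 2^{p-1} s^p\,\Gamma(p/2)$ and $(2b)^p\,\Gamma(p+1)$ respectively. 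Stirling's approximation gives $\Gamma(p/2)^{1/p} \lesssim \sqrt{p}$ and $\Gamma(p+1)^{1/p} \lesssim p$, so taking $p$-th roots and using $(A+B)^{1/p} \leq A^{1/p} + B^{1/p}$ (up to a factor of $2$) yields $\|Y\|_p \lesssim s\sqrt{p} + bp$, completing direction (1).

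For direction (2), given $\lambda_1, \lambda_2 \geq 0$, the plan is to Markov with exponent $p_\star := \max\{1,\, \min(\lambda_1^2, \lambda_2)\}$:
\[
\P\bigl(Z > C(m + \lambda_1 s + \lambda_2 b)\bigr) \leq \left(\frac{m + \sqrt{p_\star}\,s + p_\star b}{C(m + \lambda_1 s + \lambda_2 b)}\right)^{\!p_\star}.
\]
When $\min(\lambda_1^2, \lambda_2) \geq 1$, taking $p_\star = \min(\lambda_1^2, \lambda_2)$ gives $\sqrt{p_\star} \leq \lambda_1$ and $p_\star \leq \lambda_2$, so the numerator is pointwise at most the denominator divided by $C$; choosing $C = e$ then yields a bound of $e^{-p_\star} \leq e^{-\lambda_1^2} + e^{-\lambda_2}$. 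When $\min(\lambda_1^2, \lambda_2) < 1$, the target right-hand side already exceeds $e^{-1}$; I would handle this edge case by applying Markov at $p_\star = 1$ (using $\|Z\|_1 \leq m+s+b$) for a sufficiently large universal $C$, noting that when either $\lambda_i$ is very small the right-hand side exceeds $1$ and the inequality is vacuous.

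The main obstacle is really just careful bookkeeping: in direction (1), one must track the $p$-th-root asymptotics of $\Gamma$ sharply enough to avoid extraneous logarithmic factors in $p$; in direction (2), one must pick the universal constant $C$ uniformly so that both the principal case and the edge case $\min(\lambda_1^2, \lambda_2) < 1$ go through with the same $C$. No deeper ideas are needed beyond layer-cake, Minkowski, Markov, and Stirling.
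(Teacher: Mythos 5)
The paper does not prove this lemma --- it is quoted from the literature --- so I am judging your argument on its own. Part (1) is correct and is the standard proof: the reparametrization $\lambda_1=u/(2s)$, $\lambda_2=u/(2b)$, the layer-cake formula applied to $(Z-m)_+$, the two Gamma integrals, and the bounds $\Gamma(p/2)^{1/p}\lesssim\sqrt p$, $\Gamma(p+1)^{1/p}\lesssim p$ all check out (you should add a word about the degenerate cases $s=0$ or $b=0$, where the substitution divides by zero, but that is routine). The main case of part (2), $\min(\lambda_1^2,\lambda_2)\ge 1$ with $p_\star=\min(\lambda_1^2,\lambda_2)$ and $C=e$, is also correct and is exactly how this direction is normally proved.

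The edge case of part (2) is a genuine gap, and no choice of universal $C$ can close it, because the lemma as transcribed here is actually false in that regime. Take $m=b=0$ and $Z\equiv s$ deterministic, so $\|Z\|_p=s\le\sqrt p\,s$ for all $p\ge 1$. For $\lambda_1=1/(2C)$ and $\lambda_2$ large, the left side is $\P(s>s/2)=1$ while the right side is $e^{-1/(4C^2)}+e^{-\lambda_2}<1$. Your Markov-at-$p_\star=1$ bound sees this: with $m=b=0$ it gives $\frac{s}{C\lambda_1 s}=\frac{1}{C\lambda_1}$, which blows up as $\lambda_1\to 0$ for fixed $C$; and your fallback claim that the right-hand side ``exceeds $1$'' when some $\lambda_i$ is small is not true when the other $\lambda_i$ is large (e.g.\ $\lambda_1^2=0.01$, $\lambda_2=100$ gives a right-hand side of about $0.99$). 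The correct form of the lemma (as in Ledoux--Talagrand and Boucheron--Lugosi--Massart) asserts the tail bound in part (2) only for $\lambda_1,\lambda_2\ge 1$, i.e.\ only for deviations of size at least $C(m+s+b)$; the paper's quantification ``for every $\lambda_1,\lambda_2$'' is an over-statement, though harmless downstream since it is only invoked there with $u=\log(1/\delta)$ large. So the fix is not a cleverer edge-case argument but a restriction of the claim: state and prove part (2) for $\lambda_1,\lambda_2\ge 1$, which your main case already does.
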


\subsection{Chaining}

In this subsection we provide relevant definitions for a technique called \emph{generic chaining}, as well as statements of some of the results in the area. Those tools have been designed to provide answers about the supremum of the fluctuations from mean for a large collection of random variables, when the reasonable bounds for covariances in terms of the geometry of the set of indices are at hand. In particular, those methods reduce questions about such fluctuations to questions about purely geometric quantities of the set of indices, and they proved to be extremely useful in a number of applications. The generic chaining method will be a core of the proof of \Theorem{main}. For a more detailed exposition of this technique, we refer the reader to an excellent book on that topic \cite{Talagrand14}. 

\begin{definition}[Admissible sequence]
  For an arbitrary set $T$, we say that a sequence of its subsets $(T_k)_{k=0}^{\infty}$ is admissible if for every number $k$ it is true that $T_{k} \subset T_{k+1}$ and $|T_k| \leq 2^{2^k}$ for $k\ge 1$ and $|T_0| = 1$.
\end{definition}

\begin{definition}[Gamma functionals]
  For a metric space $(T, d)$ we define
  \begin{equation}
    \gamma_{\alpha}(T, d) := \inf_{(T_k)} \sup_{x \in T} \sum_{k=0}^\infty 2^{k/\alpha} d(x, T_k)
    \label{}
  \end{equation}
  where the infimum is taken over all admissible sequences $T_k$.  In the above formula we define as usual $d(x, T_k) := \inf_{t \in T_k} d(x, t)$. 
\end{definition}
\begin{fact}
  If $d$ and $d'$ are two metrics such that for $d(t_1, t_2) = C d'(t_1, t_2)$ for every pair of points $t_1, t_2$, then $\gamma_{\alpha}(T, d) = C \gamma_{\alpha}(T, d')$
\end{fact}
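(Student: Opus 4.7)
The plan is to unwind the definition of $\gamma_\alpha$ and observe that rescaling the metric by a positive constant $C$ rescales every quantity in the infimum by the same constant, while leaving the set of admissible sequences (which depends only on $T$, not on the metric) untouched. Note that since $d$ and $d'$ are both metrics and $d = C d'$, we must have $C > 0$.

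First, fix an admissible sequence $(T_k)_{k \ge 0}$ of subsets of $T$ and a point $x \in T$. Using $d(x, t) = C d'(x, t)$ for every $t \in T_k$ and the fact that $C > 0$ (so $\inf$ commutes with scaling by $C$), I would write
\begin{equation*}
d(x, T_k) \;=\; \inf_{t \in T_k} d(x, t) \;=\; C \inf_{t \in T_k} d'(x, t) \;=\; C \, d'(x, T_k).
\end{equation*}
Multiplying by $2^{k/\alpha}$ and summing over $k \ge 0$ gives $\sum_k 2^{k/\alpha} d(x, T_k) = C \sum_k 2^{k/\alpha} d'(x, T_k)$, and taking the supremum over $x \in T$ preserves this identity.

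Finally, the collection of admissible sequences is defined purely in terms of $T$ (the conditions $T_k \subset T_{k+1}$, $|T_0| = 1$, and $|T_k| \le 2^{2^k}$ for $k \ge 1$ are independent of the metric), so taking the infimum over admissible sequences on both sides and again pulling out the positive factor $C$ yields $\gamma_\alpha(T, d) = C \gamma_\alpha(T, d')$. There is no real obstacle here; this is a routine verification that the $\gamma_\alpha$ functional is $1$-homogeneous in the metric, and I would present it essentially in the two displays above.
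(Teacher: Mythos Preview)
Your proof is correct; the paper states this fact without proof, treating it as an immediate consequence of the definition, and your routine verification is exactly what one would write if a proof were required.
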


\begin{theorem}[Generic chaining \cite{Talagrand14}, Theorem 2.2.23]
	\TheoremName{chaining}
  Let $T$ be an arbitrary set of indices, and $d_1, d_2 : T\times T \to \R_{\geq 0}$ two metrics on $T$. Suppose that with any point $t \in T$ we have associated random variable $X_t$, with $\E X_t = 0$. Suppose moreover, that for any two points $u, w \in T$ we have a tail bound:
  \begin{equation}
    \P\left(|X_u - X_v| > \lambda\right) \lesssim \exp\left(-\frac{\lambda^2}{d_1(u, v)^2}\right) + \exp\left(-\frac{\lambda}{d_2(u, v)}\right)
    \label{}
  \end{equation}
  Then
  \begin{equation}
    \E \sup_{u \in T} |X_u| \lesssim \gamma_2(T, d_1) + \gamma_1(T, d_2)
    \label{}
  \end{equation}
\end{theorem}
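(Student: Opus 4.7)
The plan is to carry out the standard generic chaining argument, adapted to handle the mixed subgaussian/subgamma tail bound. First, I would reduce to the case where $T$ is finite by monotone convergence. Fix a base point $t_0 \in T$. Let $(S_k)$ and $(U_k)$ be admissible sequences that nearly achieve $\gamma_2(T, d_1)$ and $\gamma_1(T, d_2)$ respectively, and combine them into a single admissible sequence by taking $T_0 = \{t_0\}$ and $T_k = S_{k-1} \cup U_{k-1}$ for $k \ge 1$. Since $|S_{k-1}|, |U_{k-1}| \le 2^{2^{k-1}}$, the union satisfies $|T_k| \le 2 \cdot 2^{2^{k-1}} \le 2^{2^k}$, so $(T_k)$ is admissible. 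Crucially, for each $t \in T$, both $d_1(t, T_k) \le d_1(t, S_{k-1})$ and $d_2(t, T_k) \le d_2(t, U_{k-1})$. For each $t$, I would choose projections $\pi_k(t) \in T_k$; the cleanest option is to take one projection minimizing $d_1(t,\cdot)$ and another minimizing $d_2(t,\cdot)$, and either run two separate chains or use a single chain and absorb the slack via the triangle inequality.

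Next, I would apply the chaining decomposition. For each $t \in T$, the telescoping identity
\[
X_t - X_{t_0} = \sum_{k \ge 1}\bigl(X_{\pi_k(t)} - X_{\pi_{k-1}(t)}\bigr)
\]
holds. For a universal constant $u$ to be chosen, define the event $\mathcal{E}$ that for every $k \ge 1$ and every ordered pair $(s, s') \in T_k \times T_{k-1}$,
\[
|X_s - X_{s'}| \le u\cdot 2^{k/2}\, d_1(s, s') + u\cdot 2^k\, d_2(s, s').
\]
The hypothesized tail bound together with a union bound over the $\le 2^{2^k}\cdot 2^{2^{k-1}} \le 2^{2^{k+1}}$ pairs at level $k$ gives
\[
\P(\neg \mathcal{E}) \lesssim \sum_{k \ge 1} 2^{2^{k+1}} \bigl(e^{-c u^2 2^k} + e^{-c u\cdot 2^k}\bigr),
\]
which is at most $1/2$ for $u$ a sufficiently large absolute constant. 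The scalings $2^{k/2}$ and $2^k$ are precisely what the subgaussian and subgamma tails can each afford against the doubly-exponential union-bound cost at level $k$.

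On $\mathcal{E}$, the triangle inequality $d_i(\pi_k(t), \pi_{k-1}(t)) \le d_i(t, \pi_k(t)) + d_i(t, \pi_{k-1}(t))$ bounds $|X_t - X_{t_0}|$ by
\[
u\sum_{k \ge 1} 2^{k/2}\bigl(d_1(t,\pi_k(t)) + d_1(t,\pi_{k-1}(t))\bigr) + u\sum_{k \ge 1} 2^k \bigl(d_2(t,\pi_k(t)) + d_2(t,\pi_{k-1}(t))\bigr),
\]
and after reindexing each sum is bounded by a constant times $\gamma_2(T, d_1) + \gamma_1(T, d_2)$ via the choice of $(S_k)$ and $(U_k)$, uniformly in $t$. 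To pass from this in-probability statement to the expectation bound, I would redo the chaining with free excursion parameters $\lambda_1, \lambda_2$ in the definition of $\mathcal{E}$, obtaining a tail bound of the form $\P(\sup_t |X_t - X_{t_0}| > C(\gamma_2(T,d_1) + \gamma_1(T,d_2)) + \lambda_1 \sigma_T + \lambda_2 b_T) \le e^{-\lambda_1^2} + e^{-\lambda_2}$ for appropriate global scales $\sigma_T, b_T$ that are themselves controlled by the $\gamma_\alpha$ functionals, and then apply \Lemma{moment-tail-bounds} to integrate the tail.

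The main obstacle, and the place where generic chaining is genuinely cleverer than Dudley's inequality, is the careful bookkeeping required to handle both metrics with a single admissible sequence and match the union-bound cost at each level $k$ to the available tail decay. One subtlety is that a single projection $\pi_k(t) \in T_k$ does not simultaneously minimize both $d_1$ and $d_2$; the resolution is either to use two chains (one per metric) and combine them linearly, or to pick $\pi_k(t)$ arbitrarily and rely on the triangle inequality, paying only constant factors. The real crux is that the linear (rather than quadratic) exponent on the subgamma tail forces the $d_2$-increments in the chain to be weighted by $2^k$ rather than $2^{k/2}$, which is exactly what makes them compatible with the $\gamma_1$ (and not $\gamma_2$) functional appearing in the conclusion.
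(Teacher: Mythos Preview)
The paper does not give its own proof of this theorem; it is quoted verbatim as Theorem~2.2.23 of \cite{Talagrand14} and used as a black box in \Section{main-proof}. Your sketch is the standard generic chaining argument for processes with mixed subgaussian--subexponential increments, which is essentially how Talagrand proves it: merge near-optimal admissible sequences for the two metrics into a single admissible sequence, telescope along the chain, and at each level $k$ union bound over the $\le 2^{2^{k+1}}$ links using the two-part tail with thresholds scaled by $2^{k/2}$ and $2^k$ respectively. That outline is correct.

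Two small remarks. First, the issue you flag about a single projection $\pi_k(t)$ not simultaneously minimizing both distances is real but harmless: with the merged sequence $T_k = S_{k-1}\cup U_{k-1}$ you may simply pick $\pi_k(t)$ to be the $d_1$-nearest point in $S_{k-1}$ and separately the $d_2$-nearest point in $U_{k-1}$, run two telescoping chains, and add the results; both chains live inside the same admissible sequence so the union bound is unaffected. Second, for the passage from the in-probability bound to the expectation bound, you do not need the detour through \Lemma{moment-tail-bounds}: once the chaining is carried out with a free multiplicative parameter $u$ in place of your fixed constant, the resulting tail $\P\bigl(\sup_t |X_t - X_{t_0}| > Cu(\gamma_2 + \gamma_1)\bigr) \lesssim \sum_k 2^{2^{k+1}} e^{-c u 2^k}$ is directly integrable in $u$, yielding $\E\sup_t |X_t - X_{t_0}| \lesssim \gamma_2 + \gamma_1$.
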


\begin{theorem}[Dirksen, \cite{Dirksen15}]
	\TheoremName{chaining-tails}
	Let $T$ be an arbitrary set of indices and $d_1, d_2: T \times T \to \R_{\geq 0}$ two metrics on $T$. Suppose that with any point $t\in T$ we have associated random variable $X_t$, such that $\E X_t = 0$. Suppose moreover that for any two points $u, w \in T$, we have a tail bound
  \begin{equation*}
    \P(|X_u - X_v| > \lambda) \lesssim \exp\left(-\frac{\lambda^2}{d_1(u, v)^2}\right) + \exp\left(-\frac{\lambda}{d_2(u, v)}\right)
    \label{}
  \end{equation*}
  Then for there exists an universal constant $C$, such that for any $u > 0$
  \begin{equation*}
	  \P\left(\sup_{u \in T} |X_u| > C(\gamma_2(T, d_1) + \gamma_1(T, d_2) + \sqrt{u} \Delta(T, d_1) + u \Delta(T, d_2))\right) < e^{-u}
    \label{}
  \end{equation*}

  where $\Delta(T, d) := \sup_{u,v \in T} d(u, v)$.

\end{theorem}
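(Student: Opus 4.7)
The plan is to adapt the generic chaining argument behind \Theorem{chaining} so that the tail parameter $u$ is carried explicitly through the chain rather than only its expectation being controlled. First, fix a single admissible sequence $(T_k)_{k\ge 0}$ that is simultaneously near-optimal for both $\gamma_2(T, d_1)$ and $\gamma_1(T, d_2)$; such a sequence exists with a constant-factor loss by interleaving two near-optimal sequences into one, since the constraint $|T_k|\le 2^{2^k}$ tolerates one extra doubling per level. Let $t_0$ be the unique point of $T_0$, let $\pi_k(t)\in T_k$ denote a closest point to $t$ in a convenient combined metric, and assume $T$ is finite by a standard monotone approximation. The crucial choice is the split level $K$ defined by $2^K\le u<2^{K+1}$ (take $K=0$ if $u<1$): below this level we use a single union bound, above it ordinary generic chaining.

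For the coarse regime, apply the hypothesized tail bound to each pair $(a,t_0)$ with $a\in T_K$ at deviation $C_0\bigl(\sqrt{u+2^K}\,d_1(a,t_0)+(u+2^K)\,d_2(a,t_0)\bigr)$, so that each individual failure probability is at most $2e^{-c(u+2^K)}$. Since $|T_K|\le 2^{2^K}\le e^{u\ln 2}$, choosing $C_0$ large enough, the union bound yields that with probability at least $1-2e^{-u}$,
\[
\max_{a\in T_K}\,|X_a-X_{t_0}|\;\lesssim\;\sqrt{u}\,\Delta(T,d_1)+u\,\Delta(T,d_2),
\]
using $2^K<2u$ and the definition of diameter. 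For the fine regime, at each level $k>K$ and each of the at most $|T_{k-1}|\cdot|T_k|\le 2^{2^{k+1}}$ pairs $(a,b)\in T_{k-1}\times T_k$, apply the tail bound at deviation $C_0\bigl(2^{k/2}d_1(a,b)+2^kd_2(a,b)\bigr)$; each failure probability is $\le 2e^{-c\cdot 2^k}$, so the union bound at level $k$ is $\le e^{-c'2^k}$ once $C_0$ is chosen with $c>2\ln 2$, and summing over $k>K$ yields total failure probability $\lesssim e^{-c'2^{K+1}}\le e^{-u}$. On the intersection of these events, telescoping $X_t-X_{\pi_K(t)}=\sum_{k>K}(X_{\pi_k(t)}-X_{\pi_{k-1}(t)})$ and the triangle inequality give
\[
|X_t-X_{\pi_K(t)}|\;\lesssim\;\sum_{k>K}\bigl(2^{k/2}\,d_1(\pi_k(t),\pi_{k-1}(t))+2^k\,d_2(\pi_k(t),\pi_{k-1}(t))\bigr)\;\lesssim\;\gamma_2(T,d_1)+\gamma_1(T,d_2),
\]
by near-optimality of $(T_k)$.

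Combining the two regimes via the triangle inequality bounds $\sup_{t\in T}|X_t-X_{t_0}|$ by $C\bigl(\gamma_2(T,d_1)+\gamma_1(T,d_2)+\sqrt{u}\,\Delta(T,d_1)+u\,\Delta(T,d_2)\bigr)$ with probability at least $1-e^{-u}$ after rescaling constants. This is equivalent up to an absolute constant to the stated bound on $\sup_t|X_t|$ since $\E X_t=0$ together with the differences bound implies the same inequality for $\sup_{u,v}|X_u-X_v|$ (and the intended application to \Theorem{main} uses the centered form directly). The main technical obstacle and the whole point of the split is the choice $K\simeq\log_2 u$: without truncation, the $\sqrt{u}$ and $u$ deviations would accumulate across all chain links and could not be absorbed into the diameter-based quantities, whereas with this split the coarse-level count $\log|T_K|\lesssim u$ matches the $e^{-u}$ tail budget exactly, while the fine-level chaining continues to yield the usual $\gamma$-functionals with no $u$-dependent inflation.
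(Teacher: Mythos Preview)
The paper does not prove this theorem; it is quoted as a black-box result from \cite{Dirksen15} and invoked in the proof of \Theorem{main}. So there is no ``paper's own proof'' to compare against. That said, your sketch is exactly the standard argument behind Dirksen's theorem (and the tail-bound version of generic chaining in Talagrand's book): split the chain at level $K$ with $2^K\simeq u$, control the coarse part $\max_{a\in T_K}|X_a-X_{t_0}|$ by a straight union bound at diameter-scale deviations, and run ordinary chaining for $k>K$. The high-level structure and the arithmetic of the union bounds are correct.

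Two places deserve more care. First, the step ``a single admissible sequence with a single projection $\pi_k$ is near-optimal for both $\gamma_2(T,d_1)$ and $\gamma_1(T,d_2)$'' is true but not obtained by merely interleaving two nets: if $\pi_k(t)$ is the $d_1$-closest point in $T_k$, there is no reason $d_2(\pi_k(t),\pi_{k-1}(t))\lesssim d_2(t,T_{k-1})$. The usual fix is either to work with admissible \emph{partitions} (so that $\pi_k(t)$ is forced to be the same for both metrics), or to build $T_k$ from pairs in $A_{k-1}\times B_{k-1}$ with a common witness in $T$; either device costs only a shift of one level and a constant. Second, the passage from $\sup_t|X_t-X_{t_0}|$ to $\sup_t|X_t|$ does not follow from $\E X_t=0$ alone: the increment hypothesis gives no tail bound on a single $|X_{t_0}|$. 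One needs a reference point with $X_{t_0}=0$ almost surely, which is available in the paper's application (take $v=0\in B_1$, so $X_0=0$), and is implicitly assumed in Dirksen's formulation as well. Neither issue is fatal, but as written both steps are asserted rather than justified.
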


\begin{remark}\textup{
The work \cite{LuhV15} observed that the method of their proof is connected to generic chaining, but that after a certain point the methods ``become different in all aspects'' \cite[Section G]{LuhV15}. As we will see soon in the proof of \Theorem{main} in \Section{main-proof}, our analysis in fact simply uses the generic chaining results above, black box, without any ad hoc adjustments. Thus, in addition to improving the bounds in \cite{LuhV15}, our proof also has the benefit of using standard chaining results, perhaps thus also making the proof more accessible.
}
\end{remark}

In some special cases it is known that bounds obtained via generic chaining are optimal up to a constant factor. We will use two of such results. Strictly speaking these results are not crucial in our analysis (one could also proceed by constructing near-optimal admissible sequences for the two different sets $T$ that arise in our proof), but invoking these results shrinks the length of our final proof significantly.

\begin{theorem}[Majorizing measures \cite{Talagrand14}, Theorem 2.4.1]
	\TheoremName{majorizing-measures}
  Let $T \subset \R^n$, and assume that $g=(g_1, \ldots g_n)$ is a vector of i.i.d. standard normal random variables. Then
  \begin{equation}
    \E \sup_{t \in T} \inprod{g, t} \simeq \gamma_2(T, d_2)
    \label{}
  \end{equation}
  Where $d_p$ is the metric induced by the $\ell_p$ norm.
\end{theorem}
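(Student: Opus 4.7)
The statement is Talagrand's celebrated majorizing measures theorem, asserting a two-sided bound
$\E \sup_{t \in T} \langle g, t\rangle \simeq \gamma_2(T, d_2)$ for a canonical Gaussian process. The two directions have very different difficulty, so I would attack them separately.

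For the upper bound $\E \sup_{t\in T}\langle g,t\rangle \lesssim \gamma_2(T,d_2)$, I would simply invoke \Theorem{chaining} with both metrics equal to $d_2$. For any $u,v\in T$, the increment $X_u - X_v = \langle g, u-v\rangle$ is a centered Gaussian with variance $\|u-v\|_2^2 = d_2(u,v)^2$, hence is $(d_2(u,v), 0)$-subgamma, so the tail hypothesis of \Theorem{chaining} is satisfied with $d_1 = d_2$ and the $d_2$-metric of \Theorem{chaining} taken to be $0$ (or any trivial metric, making $\gamma_1$ vanish after centering). Applying generic chaining gives the desired upper bound immediately, with no new work beyond what was developed in the chaining preliminaries.

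The lower bound $\gamma_2(T,d_2) \lesssim \E\sup_{t\in T}\langle g,t\rangle$ is the deep content, and this is where essentially all of the work lies. My plan is to construct an admissible sequence $(T_k)$ explicitly via a top-down partitioning scheme, and to witness at each step a large contribution to the supremum. The central analytic input is \emph{Sudakov minoration}: if $N(T,d_2,\eps)$ denotes the $\eps$-packing number, then $\E\sup_{t\in T}\langle g,t\rangle \gtrsim \eps\sqrt{\log N(T,d_2,\eps)}$. From this one defines a functional $F(A) := \E\sup_{t\in A}\langle g,t\rangle$ on subsets $A\subseteq T$ and verifies it satisfies the Talagrand \emph{growth condition}: if $A_1,\dots,A_m \subseteq A$ are pairwise $d_2$-separated by at least $2\eps$, then
\[
F(A) \gtrsim \eps\sqrt{\log m} + \min_{i \le m} F(A_i).
\]
This follows by splitting the Gaussian process into a piece that distinguishes which $A_i$ the supremum lies in (handled by Sudakov) and the residual supremum inside the chosen piece (handled by the concentration of Gaussian maxima and independence of orthogonal projections of $g$).

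With the growth condition in hand, the combinatorial heart of the argument is the iterative construction: starting from $T_0 = \{t_0\}$ for an arbitrary $t_0$, at stage $k$ partition each current cell into at most $N_k = 2^{2^k}/|T_{k-1}|$ sub-cells of radius roughly the right geometric scale, and pick one representative per sub-cell to form $T_k$. The growth condition is applied repeatedly to certify that each partition step contributes at least $2^{k/2}\cdot(\text{radius})$ to some $t$'s chaining sum, and that the total chaining sum is absorbed into a single telescoping bound by $F(T) = \E\sup_t\langle g,t\rangle$. I expect the main obstacle to be precisely this partitioning scheme: one has to carefully choose radii so that the partition is both \emph{admissible} (cardinality bound $2^{2^k}$) and \emph{balanced} (no cell is allowed to hide a large chaining contribution), and verifying the growth condition in the inductive step requires a delicate application of Gaussian concentration to decouple the ``which piece'' from the ``supremum within the piece''. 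This is exactly Talagrand's original partitioning theorem, and I would follow the presentation in \cite{Talagrand14}, Chapter 2, essentially verbatim; the only thing to do beyond that is to note the upper bound we have already proved, so that the $\simeq$ in the statement is genuinely two-sided.
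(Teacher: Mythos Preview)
The paper does not prove this theorem at all: it is quoted verbatim from \cite{Talagrand14} as background and used as a black box. Specifically, in \Section{main-proof} the paper invokes only the direction $\gamma_2(B_1,d_2)\lesssim \E_g\sup_{t\in B_1}\inprod{g,t}$ to deduce $\gamma_2(B_1,d_2)\lesssim\sqrt{\log n}$; no argument for the majorizing measures theorem itself is supplied or expected.

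Your sketch is a reasonable outline of the standard proof in \cite{Talagrand14}: the upper bound via generic chaining (indeed immediate from \Theorem{chaining} with a single subgaussian metric), and the lower bound via Sudakov minoration, the growth condition for the functional $F(A)=\E\sup_{t\in A}\inprod{g,t}$, and the recursive partitioning scheme. That is exactly the route Talagrand takes in Chapter~2, so there is nothing to compare against here beyond noting that the paper deliberately treats this as an off-the-shelf result rather than reproving it. One minor remark: your description of the growth-condition step (``decouple the `which piece' from the `supremum within the piece'\,'') is the right intuition, but the actual mechanism in Talagrand's proof is Gaussian concentration of the supremum around its mean combined with the Sudakov lower bound on well-separated centers, not independence of orthogonal projections per se; the pieces $A_i$ need not live in orthogonal subspaces.
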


\begin{theorem}[\cite{Talagrand14}, Theorem 10.2.8]
	\TheoremName{mm-for-exp}
  Let $T \subset \R^n$, and assume that $x=(x_1, \ldots, x_n)$ is a vector of i.i.d. standard exponential random variables. Then
  \begin{equation}
    \E \sup_{t\in T} \inprod{t, x} \simeq \gamma_2(T, d_2) + \gamma_1(T, d_\infty)
    \label{}
  \end{equation}
\end{theorem}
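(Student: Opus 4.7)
The plan is to establish the two-sided equivalence $\E \sup_{t\in T} \langle t, x\rangle \simeq \gamma_2(T, d_2) + \gamma_1(T, d_\infty)$ by proving the upper and lower bounds separately; the lower bound is the substantial part of the work.

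For the upper bound, set $X_t = \langle t, x\rangle$ and write the centered increments as
\[
X_u - X_v - \E(X_u - X_v) = \sum_i (u_i - v_i)\xi_i,
\]
where $\xi_i := x_i - 1$ is a centered standard exponential. A direct computation gives $\psi_{\xi_i}(\lambda) = -\lambda - \ln(1-\lambda)$ for $\lambda < 1$, which satisfies the subgamma bound with universal variance proxy and scale (see \cite[Section 2.4]{Boucheron13}). Combining \Lemma{psi-independent} with parts 1--2 of \Lemma{subgamma}, the sum $\sum_i (u_i - v_i)\xi_i$ is $(C \|u-v\|_2, C \|u-v\|_\infty)$-subgamma, and part 3 of \Lemma{subgamma} then yields
\[
\P\bigl(|X_u - X_v - \E(X_u - X_v)| > \lambda\bigr) \lesssim \exp\!\bigl(-\lambda^2/(C'\|u-v\|_2^2)\bigr) + \exp\!\bigl(-\lambda/(C' \|u-v\|_\infty)\bigr).
\]
Applying \Theorem{chaining} with $d_1 \propto d_2$ and the second metric $\propto d_\infty$, and using positive homogeneity of $\gamma_\alpha$, gives $\E \sup_{t \in T} |X_t - \E X_t| \lesssim \gamma_2(T, d_2) + \gamma_1(T, d_\infty)$. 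Since the $\gamma$ functionals are translation invariant, one may replace $T$ by $T - t_0$ for any fixed $t_0 \in T$, after which the deterministic offset vanishes from the supremum.

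For the lower bound, the plan is to decompose each exponential into a bounded (subgaussian) piece and an unbounded (heavy-tail) piece and extract the two $\gamma$ functionals from the two components. Write $x_i = y_i + z_i$ with $y_i := \min(x_i, M)$ for a universal constant $M$ and $z_i := (x_i - M)_+$. The centered part $y_i - \E y_i$ is subgaussian, so by a standard symmetrization/contraction argument $\E \sup_t \langle t, y - \E y\rangle \gtrsim \E \sup_t \langle t, g\rangle$ for an i.i.d.\ Gaussian vector $g$; invoking \Theorem{majorizing-measures} then produces $\gamma_2(T, d_2) \lesssim \E \sup_t \langle t, x\rangle$ up to lower-order terms. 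For the heavy-tail piece, $z_i$ admits a Poisson representation: it equals zero with probability $1-e^{-M}$, and conditional on being nonzero it is shifted exponential. The process $\langle t, z - \E z\rangle$ is thus essentially a signed sum of independent Bernoulli variables with exponential weights, and a Bednorz--Lata\l{}a-type lower bound for suprema of Bernoulli processes, combined with exponential tails for the weights, shows its expected supremum is at least a universal constant times $\gamma_1(T, d_\infty)$. A suitable choice of the universal $M$ guarantees that the error terms in one decomposition are dominated by the main contribution in the other.

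The main obstacle is the $\gamma_1(T, d_\infty)$ lower bound: Sudakov-type minoration only produces Dudley-entropy bounds, which lose logarithmic factors for heavy-tailed processes. Obtaining the sharp $\gamma_1$ functional requires the full partitioning machinery of generic chaining together with a Bernoulli-process lower bound, and this step forms the technical core of Talagrand's proof in \cite[Chapter~10]{Talagrand14}; the upper bound and the Gaussian-part lower bound, by contrast, are direct consequences of the results already stated in this excerpt.
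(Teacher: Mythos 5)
This theorem is imported by the paper as a black-box citation of \cite[Theorem 10.2.8]{Talagrand14}; the paper contains no proof of it, so there is nothing internal to compare against. Your upper-bound sketch (centered exponential increments are subgamma, hence the increment tail bound, hence \Theorem{chaining} gives $\lesssim \gamma_2(T,d_2)+\gamma_1(T,d_\infty)$) is the standard argument and is consistent with the toolkit the paper assembles. Note, however, that the paper only ever uses the \emph{other} direction, $\gamma_1(B_1,d_\infty) \lesssim \E\sup_{t}\inprod{t,x}$, i.e.\ precisely the lower bound that you defer.

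Your lower-bound plan has two genuine gaps. First, the claim that the truncated part $y_i=\min(x_i,M)$ satisfies $\E\sup_t\inprod{t,y-\E y}\gtrsim \E\sup_t\inprod{t,g}$ ``by a standard symmetrization/contraction argument'' is false as stated: contraction and subgaussianity give \emph{upper} bounds, and a sum of bounded symmetric variables can be much smaller than the corresponding Gaussian process (e.g.\ $\E\sup_{t\in B_1}\inprod{t,\epsilon}=1$ for Rademacher $\epsilon$ while $\E\sup_{t\in B_1}\inprod{t,g}\simeq\sqrt{\log n}$). The $\gamma_2$ minoration for the exponential process genuinely uses that the exponential tail dominates the Gaussian tail at all scales $u\gtrsim 1$, which is exactly what your truncation destroys. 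Second, extracting $\gamma_1(T,d_\infty)$ from the heavy-tail piece via a Bednorz--Lata{\l}a-type Bernoulli bound is not a complete argument: the Bernoulli theorem produces a decomposition $T\subset T_1+T_2$ with $\gamma_2(T_1,d_2)$ and $\sup_{t\in T_2}\|t\|_1$ controlled, which does not directly yield a $\gamma_1(\cdot,d_\infty)$ lower bound. Talagrand's actual proof does not decompose the variables at all; it proves a single Sudakov-type minoration for canonical exponential processes (points mutually separated in both $d_2$ and $d_\infty$ force $\E\sup$ to be large) and feeds it into the growth-functional/partitioning machinery to lower bound $\gamma_2(T,d_2)+\gamma_1(T,d_\infty)$ simultaneously. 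A final small point: for the two-sided equivalence one should take the $x_i$ symmetric (two-sided) exponential, or center the process as you do in the upper bound; the application in the paper is insensitive to this.
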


\section{Proof of the stochastic process bound}\SectionName{main-proof}
In this section we will prove the following theorem, which provides a stronger form of \Equation{stoch-process}.

\begin{theorem}\TheoremName{main}
	Let $\Pi \in \R^{m \times n}$ be a random matrix with i.i.d. random entries $\pi_{ij} = \chi_{ij} g_{ij}$, where $\chi_{ij} \in \{0, 1\}$ is a Bernoulli random variable with $\E \chi_{ij} = \theta$, and $g_{ij}$ symmetric subgaussian random variable. Moreover, assume that $\frac{1}{n} \leq \theta$. When $m = \Omega(\varepsilon^{-2} n \log\frac{n}{\delta})$,
	\begin{equation}
		\Pr_\Pi\left(\sup_{v\in B_1} \left| \| \Pi v \|_1 - \E \|\Pi v\|_1 \right| > \varepsilon \cdot \E \| \Pi v \|_1\right) < \delta
		\EquationName{random-process}
	\end{equation}
\end{theorem}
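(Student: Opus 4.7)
The plan is to apply the generic chaining tail bound (\Theorem{chaining-tails}) to the stochastic process $v \mapsto Z_v := \|\Pi v\|_1 - \E\|\Pi v\|_1$ indexed by $v \in B_1$. There are three logical pieces: (i) a homogeneity-based reduction to a uniform additive bound on $\sup_v |Z_v|$; (ii) a subgamma tail estimate for the increments $Z_u - Z_v$; and (iii) computation of the relevant $\gamma$-functionals on $B_1$ using the majorizing measure theorems. Since $\|\Pi v\|_1$ and $\E\|\Pi v\|_1$ are both positively homogeneous in $v$, the event $|Z_v| > \varepsilon\E\|\Pi v\|_1$ is scale-invariant, so it suffices to consider $v$ on the unit $\ell_1$-sphere, where $\E\|\Pi v\|_1 = m\,\E|\langle \pi^{(1)}, v\rangle| \geq m\mu_{\min}$ with $\mu_{\min} \gtrsim \sqrt{\theta/n}$ (\cite[Lemma~16]{SpielmanWW12}, already invoked in the proof of \Theorem{prob-analysis} around equation~\eqref{eq:almost-there}). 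Thus the goal reduces to showing $\sup_{v \in B_1}|Z_v| \lesssim \varepsilon m\sqrt{\theta/n}$ with probability at least $1-\delta$.

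Write $Z_v = \sum_{i=1}^m Y_{i,v}$ with independent summands $Y_{i,v} := |\langle \pi^{(i)}, v\rangle| - \E|\langle \pi^{(i)}, v\rangle|$. Setting $H_i := \langle \pi^{(i)}, u - v\rangle = \sum_j \chi_{ij}g_{ij}(u_j - v_j)$, the Fact following \Definition{subgaussian} together with \Lemma{subgamma}(2) show that $H_i$ is $(\sqrt{2\theta}\|u-v\|_2, \|u-v\|_\infty)$-subgamma. I claim that $Y_{i,u} - Y_{i,v}$ is itself $(O(\sqrt{\theta}\|u-v\|_2), O(\|u-v\|_\infty))$-subgamma. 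Writing $A_i := |\langle \pi^{(i)}, u\rangle| - |\langle \pi^{(i)}, v\rangle|$ so that $Y_{i,u} - Y_{i,v} = A_i - \E A_i$, Jensen's inequality yields $\psi_{A_i - \E A_i}(\lambda) \le \psi_{A_i - A_i'}(\lambda)$ for an independent copy $A_i'$; the random variable $A_i - A_i'$ is symmetric, so if its magnitude can be dominated by a symmetric subgamma random variable at the correct scale then \Lemma{subgamma}(4) applies. Summing the resulting bound over $m$ independent rows via \Lemma{subgamma}(2) shows that $Z_u - Z_v$ is $(O(\sqrt{m\theta})\|u-v\|_2, O(\|u-v\|_\infty))$-subgamma, and \Lemma{subgamma}(3) then yields
\begin{equation*}
\P(|Z_u - Z_v| > \lambda) \lesssim \exp\!\left(-\frac{\lambda^2}{Cm\theta \|u-v\|_2^2}\right) + \exp\!\left(-\frac{\lambda}{C \|u-v\|_\infty}\right).
\end{equation*}
I expect this increment bound to be the main obstacle: $A_i$ itself is not a symmetric random variable (reversing the sign of $\pi^{(i)}$ leaves it invariant), so \Lemma{subgamma}(4) cannot be invoked directly, and the Jensen/symmetrization detour requires a careful sign-case argument to bound $|A_i - A_i'|$ while retaining both the $\sqrt{\theta}\|u-v\|_2$ variance scale and the $\|u-v\|_\infty$ sub-exponential scale. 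Any looser control (e.g.\ a $B$-parameter of $\sqrt{\theta}\|u-v\|_2 + \|u-v\|_\infty$) would blow up $\gamma_1$ on $B_1$ and force $m$ to be super-linear in $n$.

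To conclude, apply \Theorem{chaining-tails} with $d_1(u,v) := c_1\sqrt{m\theta}\|u-v\|_2$ and $d_2(u,v) := c_2\|u-v\|_\infty$. By \Theorem{majorizing-measures}, $\gamma_2(B_1, d_1) \simeq \sqrt{m\theta}\,\E\sup_{v \in B_1}\langle g, v\rangle = \sqrt{m\theta}\,\E\|g\|_\infty \lesssim \sqrt{m\theta\log n}$ for standard Gaussian $g$. By \Theorem{mm-for-exp}, $\gamma_1(B_1, d_2) \le \gamma_2(B_1, \ell_2) + \gamma_1(B_1, \ell_\infty) \simeq \E\sup_{v \in B_1}\langle x, v\rangle = \E\|x\|_\infty \lesssim \log n$ for standard exponential $x$. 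The diameters satisfy $\Delta(B_1, d_1) \lesssim \sqrt{m\theta}$ and $\Delta(B_1, d_2) \lesssim 1$. Invoking \Theorem{chaining-tails} with its tail parameter set to $\log(1/\delta)$ gives, with probability at least $1-\delta$,
\begin{equation*}
\sup_{v \in B_1}|Z_v| \lesssim \sqrt{m\theta\log(n/\delta)} + \log(n/\delta).
\end{equation*}
For $m \ge C\varepsilon^{-2}n\log(n/\delta)$ and $\theta \ge 1/n$, both terms are bounded by $\varepsilon m\sqrt{\theta/n}$, and since $\E\|\Pi v\|_1 \ge m\mu_{\min} \gtrsim m\sqrt{\theta/n}$ uniformly on $S_1$, this in turn is $\lesssim \varepsilon\E\|\Pi v\|_1$, completing the proof after rescaling $\varepsilon$ by an absolute constant.
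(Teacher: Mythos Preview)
Your overall strategy---reduce to a uniform deviation bound, establish a mixed-tail increment estimate, and apply \Theorem{chaining-tails} with $\gamma_2(B_1,\ell_2)$ and $\gamma_1(B_1,\ell_\infty)$ controlled via \Theorem{majorizing-measures} and \Theorem{mm-for-exp}---matches the paper exactly, and your final arithmetic is correct.

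The one substantive divergence is precisely the step you flag as the obstacle. You attempt to show $Y_{i,u}-Y_{i,v}=A_i-\E A_i$ is subgamma directly, by passing to the symmetric variable $A_i-A_i'$ and then seeking a symmetric subgamma $Y$ with $|A_i-A_i'|\le |Y|$ a.s. This can be pushed through (e.g.\ take $Y=\sigma(|H_i|+|H_i'|)$ with an independent Rademacher $\sigma$ and use $\E e^{\lambda|H_i|}\le 2\E e^{\lambda H_i}$ for symmetric $H_i$), but it is awkward: the resulting $\psi$-bound carries an additive $\ln 4$ that spoils the clean subgamma form, so one must argue tails directly rather than invoke \Lemma{subgamma}(4) verbatim. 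The paper sidesteps this entirely by symmetrizing \emph{at the process level first}: it introduces independent Rademachers $\sigma_i$ and works with $X_v:=\sum_i\sigma_i|\langle\pi_i,v\rangle|$, proving via a standard symmetrization argument (\Lemma{symmetrization}) that $\|\sup_v|\tilde X_v|\|_p\lesssim\|\sup_v|X_v|\|_p$ for all $p$. The increment $X_u-X_v=\sum_i\sigma_i A_i$ is then a sum of \emph{genuinely symmetric} summands $\sigma_iA_i$ with $|\sigma_iA_i|\le|H_i|$, so \Lemma{subgamma}(4) applies in one line. The price is an extra transfer step: after chaining gives tails for $\sup|X_v|$, one uses the moment--tail equivalence (\Lemma{moment-tail-bounds}) together with \Lemma{symmetrization} to port the same tail bound back to $\sup|\tilde X_v|$. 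Your route avoids that transfer but pays for it in the increment lemma; the paper's route trades a trivial increment lemma for a routine moment-to-tail bookkeeping step.
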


We now prove the theorem. Define $B_1 := \{ t \in \R^n : \|t \|_1 \leq 1 \}$. For each $v \in B_1$, consider
\begin{equation}
	\tilde{X}_v := \|\Pi v\|_1 - \E \|\Pi v\|_1
	\label{}
\end{equation}

We wish to prove that with high probability over $\Pi$ we have 
\begin{equation}
	\sup_{v\in B_1} |\tilde{X}_v| \leq \varepsilon \mu_{min}
	\label{}
\end{equation}
where $\mu_{min} := m\sqrt{\frac{\theta}{n}}$ is such that for every $v \in B_1$ we have $\mu_{min} \leq \E \|\Pi v\|_1$ --- as it was shown in \cite[Lemma 16]{SpielmanWW12}.

Let $\pi_1, \ldots \pi_m$ be rows of matrix $\Pi$. With each $v \in B_1$ we associate another random variable
\begin{equation}
	X_v := \sum_{i=1}^m \sigma_i |\inprod{\pi_i, v}|
	\label{}
\end{equation}

\begin{lemma}
	\LemmaName{symmetrization}
	For every integer $p$ we have
	\begin{equation}
		\| \sup_{v\in B_1} |\tilde{X}_v| \|_p \lesssim \| \sup_{v\in B_1} |X_u| \|_p
		\label{}
	\end{equation}
\end{lemma}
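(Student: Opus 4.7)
\medskip

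\noindent\textbf{Proof proposal for \Lemma{symmetrization}.}
The plan is to prove the inequality by the standard symmetrization trick, adapted to the $L_p$ norm of the supremum. The key point is that $\tilde X_v = \sum_{i=1}^m Z_{i,v}$ where $Z_{i,v} = |\inprod{\pi_i,v}| - \E|\inprod{\pi_i,v}|$ are independent and centered in $i$. I would first introduce an independent copy $\Pi' = (\pi_1', \ldots, \pi_m')$ of $\Pi$, and rewrite
\begin{equation*}
    \tilde X_v = \sum_{i=1}^m \bigl(|\inprod{\pi_i,v}| - \E_{\pi_i'} |\inprod{\pi_i',v}|\bigr).
\end{equation*}
Applying Jensen's inequality to the convex function $x \mapsto |x|^p$ after pulling the expectation over $\Pi'$ inside $\sup_v$ (which is also convex) yields
\begin{equation*}
\bigl\|\sup_{v\in B_1}|\tilde X_v|\bigr\|_p \le \Bigl\|\sup_{v\in B_1}\Bigl|\sum_{i=1}^m \bigl(|\inprod{\pi_i,v}| - |\inprod{\pi_i',v}|\bigr)\Bigr|\Bigr\|_p,
\end{equation*}
where the $L_p$ norm on the right is taken with respect to the joint law of $(\Pi,\Pi')$.

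Next I would exploit symmetry: for each fixed $i$, the random variable $W_{i,v} := |\inprod{\pi_i,v}| - |\inprod{\pi_i',v}|$ has a symmetric distribution since swapping $\pi_i$ and $\pi_i'$ negates it but leaves the joint law unchanged. Hence if $\sigma_1,\ldots,\sigma_m$ are i.i.d.\ Rademacher variables independent of everything else, then $(W_{1,v},\ldots,W_{m,v})$ and $(\sigma_1 W_{1,v},\ldots,\sigma_m W_{m,v})$ have the same joint distribution as processes indexed by $v$. Inserting the $\sigma_i$'s, applying the triangle inequality in $L_p$, and using that $\pi_i$ and $\pi_i'$ are identically distributed, I get
\begin{equation*}
\Bigl\|\sup_{v\in B_1}\Bigl|\sum_{i=1}^m \sigma_i\bigl(|\inprod{\pi_i,v}| - |\inprod{\pi_i',v}|\bigr)\Bigr|\Bigr\|_p \le 2 \Bigl\|\sup_{v\in B_1}\Bigl|\sum_{i=1}^m \sigma_i |\inprod{\pi_i,v}|\Bigr|\Bigr\|_p = 2\bigl\|\sup_{v\in B_1}|X_v|\bigr\|_p,
\end{equation*}
which is the claimed bound up to the implicit constant.

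The steps are all routine and there is no serious obstacle; the only points that need mild care are that the suprema over the (possibly uncountable) set $B_1$ are measurable (one can reduce to a countable dense subset by continuity of $v\mapsto \|\Pi v\|_1$) and that Jensen and the triangle inequality are both applied with respect to the correct conditional/joint laws. Everything else is standard symmetrization (see e.g.\ \cite{LedouxT91}).
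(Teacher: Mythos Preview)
Your proposal is correct and follows essentially the same approach as the paper's proof: introduce an independent copy of $\Pi$, apply Jensen's inequality to pass from the centered process to the symmetrized one, insert independent Rademacher signs using the symmetry of the differences, and then split via the triangle inequality (Minkowski) to obtain the factor $2$. Your write-up is in fact slightly more careful than the paper's in noting the measurability issue and being explicit about which joint law each $L_p$ norm is taken with respect to.
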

\begin{proof}
	Without loss of generality consider an even $p$, so that $|X_u|^p = X_u^p$. Let $\tilde{\Pi}$ be a random matrix, independent and identically distributed as $\Pi$. By Jensen inequality we have
	\begin{align*}
		\| \sup_{v \in B_1} |\tilde{X}_v| \|_p
		& = \left\| \sup_{v\in B_1} \|\Pi v\|_1 - \E_{\tilde{\Pi}} \| \tilde{\Pi} v\|_1  \right\|_p \\
		& \leq \left\| \sup_{v \in B_1} \|\Pi v\|_1 - \|\tilde{\Pi} v\|_1 \right\|_p \\
		& = \left\| \sup_{v \in B_1} \sum_{i=1}^m |\inprod{\pi_i, v} | - |\inprod{\tilde{\pi}_i, v}| \right\|_p
	\end{align*}

	Now each summand $|\inprod{\pi_i, v}| - |\inprod{\pi_i, v}|$ is symmetric random variable. We can introduce independent random signs $\sigma_i$, and the distribution of a summands is unaffected
	\begin{align*}
		\| \sup_{v \in B_1} |\tilde{X}_v| \|_p 
		& \lesssim \left\| \sup_{v \in B_1} \sum_{i=1}^m \sigma_i( |\inprod{\pi_i, v}| - |\inprod{\tilde{\pi}_i, v}|) \right\|_p \\
		& \leq \left\| \sup_{v \in B_1} \sum_{i=1}^m \sigma_i |\inprod{\pi_i, v}| \right\|_p + \left\| \sup_{v\in B_1} \sum_{i=1}^p (- \sigma_i) |\inprod{\tilde{\pi}_i, v}| \right\|_p \\
		& = 2 \left\| \sup_{v \in B_1} \sum_{i=1}^m \sigma_i |\inprod{\pi_i, v}| \right\|_p \\
		& \lesssim \left\| \sup_{v \in B_1} |X_v| \right\|_p
	\end{align*}
\end{proof}

We will first analyze tail behavior of the random variable $\sup_{v \in B_1} |X_v|$, and then use \Lemma{symmetrization} together with \Lemma{moment-tail-bounds} to obtain tail bounds for the random variable of original interest ($\sup_{v\in B_1} |\tilde{X}_v|$).

In order to use \Theorem{chaining-tails} to obtain tail bounds for supremum of $X_u$, we need to bound tails of random variables $X_u - X_v$ for $u, v \in B_1$.

\begin{lemma}
	\LemmaName{distance}
	For every pair of points $u, v \in B_1$, we have
	\begin{equation}
		\P(|X_u - X_v| > \lambda) \lesssim \exp\left(-\frac{\lambda^2}{2 m \theta \|u - v\|_2^2}\right) + \exp\left(-\frac{\lambda}{\|u - v\|_\infty}\right)
		\label{}
	\end{equation}
\end{lemma}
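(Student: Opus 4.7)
The plan is to reduce the problem to a sum of independent subgamma random variables, one per row of $\Pi$, and then apply part 3 of \Lemma{subgamma}. Write $w := u - v$, and for each row index $i$ set
\[
Z_i := \sigma_i\bigl(|\inprod{\pi_i, u}| - |\inprod{\pi_i, v}|\bigr),
\]
so that $X_u - X_v = \sum_{i=1}^m Z_i$ with the $Z_i$ independent. The reverse triangle inequality gives $|Z_i| \leq |\inprod{\pi_i, w}|$, and $Z_i$ is symmetric because $\sigma_i$ is an independent Rademacher. Moreover $\inprod{\pi_i, w}$ is itself symmetric (the entries $g_{ij}$ are symmetric), so the random variable $\sigma_i \inprod{\pi_i, w}$ has the same distribution as $\inprod{\pi_i, w}$, and thus the same subgamma parameters. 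By part 4 of \Lemma{subgamma} (symmetric domination), controlling the subgamma parameters of $Z_i$ reduces to controlling those of $\inprod{\pi_i, w}$.

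Next I would bound $\inprod{\pi_i, w} = \sum_{j=1}^n \chi_{ij} g_{ij} w_j$. Since $g_{ij}$ is symmetric subgaussian, $g_{ij} w_j$ is $|w_j|$-subgaussian, so by the Fact just after \Definition{subgaussian} the product $\chi_{ij} g_{ij} w_j$ is $(\sqrt{2\theta}\,|w_j|,\,|w_j|)$-subgamma. Using independence across $j$ together with part 2 of \Lemma{subgamma}, the sum $\inprod{\pi_i, w}$ is $(\sqrt{2\theta}\,\|w\|_2,\,\|w\|_\infty)$-subgamma. Combining with the previous paragraph, each $Z_i$ is $(\sqrt{2\theta}\,\|w\|_2,\,\|w\|_\infty)$-subgamma; and since all $m$ variances and scales are identical, independence and part 2 of \Lemma{subgamma} again give that $X_u - X_v$ is $(\sqrt{2 m \theta}\,\|w\|_2,\,\|w\|_\infty)$-subgamma.

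Finally, I would apply the concentration bound in part 3 of \Lemma{subgamma} with $\sigma = \sqrt{2m\theta}\,\|u-v\|_2$ and $B = \|u-v\|_\infty$, obtaining
\[
\Pr(|X_u - X_v| > \lambda) \lesssim \exp\!\left(-\frac{\lambda^2}{4 m \theta \|u-v\|_2^2}\right) + \exp\!\left(-\frac{\lambda}{2\|u-v\|_\infty}\right),
\]
which is the stated inequality up to the absolute constants absorbed into $\lesssim$. The only subtle point in the argument is justifying the symmetric domination step cleanly, i.e.\ that one may replace $|Z_i|$ by $|\inprod{\pi_i, w}|$ without losing the subgamma control; this is where the symmetry of $g_{ij}$ (and hence of $\inprod{\pi_i, w}$) is essential so that $\sigma_i \inprod{\pi_i, w}$ inherits the subgamma parameters directly. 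Everything else is a routine composition of the properties collected in \Lemma{subgamma}.
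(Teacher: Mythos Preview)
Your proposal is correct and follows essentially the same route as the paper: decompose $X_u - X_v$ row by row, use the reverse triangle inequality and symmetric domination (part~4 of \Lemma{subgamma}) to reduce each summand to $\inprod{\pi_i, u-v}$, compute the subgamma parameters of the latter via the Fact and parts~1--2 of \Lemma{subgamma}, then sum over $i$ and apply part~3. The only differences are cosmetic (you scale before applying the Fact whereas the paper applies it first and then scales, and your final constants differ by harmless factors of $2$ absorbed into $\lesssim$).
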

\begin{proof}
	We can write
	\begin{equation}
		X_u - X_v = \sum_{i=1}^m \sigma_i ( |\inprod{\pi_i, u}| - |\inprod{\pi_i, v}|)
		\label{}
	\end{equation}
	Define $Q_i := \sigma_i (|\inprod{\pi_i, u}| - |\inprod{\pi_i, v}|)$. We have $X_u - X_v = \sum_{i=1}^m Q_i$, where all $Q_i$ are symmetric and identically distributed. 

	Moreover, we have $|Q_i| = \left| |\inprod{\pi_i, u}| - |\inprod{\pi_i, v}| \right| \leq |\inprod{\pi_i, u-v}|$. Observe that each $\pi_{ij}$ is $(\sqrt{2 \theta}, 1)$-subgamma. Therefore, by basic properties of subgamma random variables (\Lemma{subgamma}) we know that $\inprod{\pi_{i}, u-v}$ is $(\sqrt{2\theta} \|u - v\|_2, \|u - v\|_\infty)$-subgamma.

	Now, as both $Q_i$ and $\inprod{\pi_{i}, u-v}$ are symmetric, and always $|Q_i| \leq |\inprod{\pi_{i}, u-v}|$, we deduce that each $Q_i$ is also $(\sqrt{2 \theta} \|u - v\|_2, \|u - v\|_\infty)$-subgamma. 

	Finally, $X_u - X_v$, as a sum of independent subgamma random variables is $( \sqrt{2 m \theta} \|u - v\|_2^2, \|u - v\|_\infty)$-subgamma. This, together with \Lemma{subgamma} implies tail bound
	\begin{equation}
	\P\left(\left|\sum_{i=1}^m Q_i\right| > \lambda\right) \lesssim\exp\left(\frac{\lambda^2}{2 m \theta \|u - v\|_2^2}\right) + \exp\left(\frac{\lambda}{\|u - v\|_\infty}\right)
		\label{}
	\end{equation}
\end{proof}

With this lemma in hand, we can use \Theorem{chaining-tails}, to deduce the tail bound for supremum of $|X_v|$.
\begin{equation}
	\P\left(\sup_{v \in B_1} |X_u| > M + \sqrt{u} D_1 + u D_2\right) < e^{-u}
	\EquationName{tail-bound}
\end{equation}
Where
\begin{align*}
	M  & := C_1 (\gamma_2(B_1, \sqrt{2 m \theta} d_2) + \gamma_1(B_1, d_\infty))\\
	D_1 & := C_2 \Delta(B_1, \sqrt{2 m \theta} d_2) \\
	D_2 & := C_3 \Delta(B_1, d_\infty)
\end{align*}
with $d_2, d_\infty$ --- metrics on $\mathbb{R}^n$ induced by norms $\ell_2, \ell_\infty$ respectively, and $C_1, C_2, C_3$ are universal constants.

We claim that, we can deduce similar tail bounds for  $\sup_{v \in B_1} |\tilde{X}_u|$. Namely
\begin{equation}
	\P\left(\sup_{v \in B_1} |\tilde{X}_u| > L(M + \sqrt{u} D_1 + u D_2)\right) < e^{-u}
	\EquationName{tail-bound-orig}
\end{equation}
for some universal constant $L$.

Indeed by \Lemma{moment-tail-bounds}, tail bound of form \Equation{tail-bound} implies moment bounds of form $\| \sup_{v \in B_1} |X_v| \|_p \lesssim M + \sqrt{p} D_1 + p D_2$.  By \Lemma{symmetrization}, the same (up to a constant) moment bounds are true for $\sup_{v\in B_1} |\tilde{X}_v|$. Finally, applying the other direction of \Lemma{moment-tail-bounds} we deduce similar tail behavior of random variable $\sup_{v\in B_1}\tilde{X}_v$, as in \Equation{tail-bound-orig}.

If we set $u := \log \frac{1}{\delta}$ in \Equation{tail-bound-orig}, we will get an upper bound for $\sup_{v \in B_1} \tilde{X}_v$ which is satisfied with probability at least $1 - \delta$. We need to understand the values of $M$, $\sqrt{u} D_1$ and $u D_2$, for this setting of $u$, and we will show how to pick $m$ such that sum of those values is smaller than $\varepsilon \mu_{min}$.

Let us focus now on bounding $M$. We have $\gamma_2(B_1, \sqrt{2 m\theta} d_2) = \sqrt{2 m \theta} \gamma_2(B_1, d_2)$. We need an upper bound for $\gamma_2(B_1, d_2)$ and $\gamma_1(B_1, d_\infty)$.

\begin{fact}
	$\gamma_2(B_1, d_2) \lesssim \sqrt{\log n}$
\end{fact}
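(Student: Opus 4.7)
The plan is to apply the Majorizing Measures theorem (\Theorem{majorizing-measures}), which states that for any $T \subset \R^n$,
\[
\gamma_2(T, d_2) \simeq \E \sup_{t \in T} \inprod{g, t},
\]
where $g = (g_1, \ldots, g_n)$ is a standard Gaussian vector. This reduces the task to bounding the expected supremum of a Gaussian linear form over $B_1$.

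Next, I would use $\ell_1/\ell_\infty$ duality: for any fixed $g \in \R^n$, by linearity the supremum of $\inprod{g, t}$ over $t \in B_1$ is attained at a vertex $\pm e_i$ of the $\ell_1$ ball, giving
\[
\sup_{t \in B_1} \inprod{g, t} = \|g\|_\infty = \max_{i \in [n]} |g_i|.
\]
So it suffices to show $\E \max_{i \in [n]} |g_i| \lesssim \sqrt{\log n}$, which is a standard Gaussian maximal inequality: since each $g_i$ is $1$-subgaussian, $\E \max_i |g_i| \le \sqrt{2\log(2n)} \lesssim \sqrt{\log n}$, e.g.\ by applying the subgaussian tail bound with a union bound over $2n$ variables $\pm g_i$, or equivalently via the standard argument $\E \max_i |g_i| \le \lambda^{-1} \log \E \exp(\lambda \max_i |g_i|)$ optimized in $\lambda$.

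Combining these two ingredients yields $\gamma_2(B_1, d_2) \simeq \E \|g\|_\infty \lesssim \sqrt{\log n}$. No step here is a serious obstacle: both the Majorizing Measures theorem and the Gaussian maximum bound are invoked as black boxes, and the duality step is elementary. The only care needed is to explicitly justify that the supremum over $B_1$ coincides with the supremum over its extreme points $\{\pm e_i\}_{i \in [n]}$, which is immediate from the linearity of $t \mapsto \inprod{g, t}$ and convexity of $B_1$.
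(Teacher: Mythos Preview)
Your proposal is correct and essentially identical to the paper's proof: the paper also invokes \Theorem{majorizing-measures} to reduce to $\E_g \sup_{t \in B_1} \inprod{g, t}$, then uses $\ell_1/\ell_\infty$ duality to identify this with $\E \|g\|_\infty \simeq \sqrt{\log n}$.
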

\begin{proof}
	By \Theorem{majorizing-measures}, we have
	\begin{equation}
		\gamma_2(B_1, d_2) \lesssim \E_g \sup_{t \in B_1} \inprod{t, g}
		\label{}
	\end{equation}
	where $g$ is a Gaussian vector. By the duality of $\ell_1$ and $\ell_\infty$ norms, for any vector $w\in\R^n$ we have $\sup_{t\in B_1} \inprod{t, v} = \|v\|_\infty$, so in particular
	$\E \sup_{t\in B_1} \inprod{t, g} = \E \|g\|_\infty \simeq \sqrt{\log n}$. 
\end{proof}

\begin{fact}
$\gamma_1(B_1, d_\infty) \lesssim \log n$
\end{fact}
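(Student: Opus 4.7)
The plan is to apply Theorem~\ref{thm:mm-for-exp} to the set $T = B_1 \subset \R^n$. That theorem gives
\begin{equation*}
\gamma_2(B_1, d_2) + \gamma_1(B_1, d_\infty) \simeq \E \sup_{t \in B_1} \inprod{t, x},
\end{equation*}
where $x = (x_1, \ldots, x_n)$ is a vector of i.i.d.\ standard exponential random variables. Since $\gamma_1(B_1, d_\infty) \ge 0$ and $\gamma_2(B_1, d_2) \ge 0$, an upper bound on the right hand side immediately yields an upper bound on $\gamma_1(B_1, d_\infty)$ alone.

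Next, I would use the same duality trick as in the proof of the $\gamma_2$ bound: by the duality between the $\ell_1$ and $\ell_\infty$ norms, for any fixed vector $w \in \R^n$ we have $\sup_{t \in B_1} \inprod{t, w} = \|w\|_\infty$. Applying this with $w = x$ gives
\begin{equation*}
\E \sup_{t \in B_1} \inprod{t, x} = \E \|x\|_\infty = \E \max_{i \in [n]} x_i,
\end{equation*}
where the last equality uses that standard exponentials are nonnegative.

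Finally, one just needs the standard fact that the expected maximum of $n$ i.i.d.\ standard exponential random variables is $\Theta(\log n)$. This is an elementary computation: $\P(\max_i x_i > t) \le n e^{-t}$ gives $\E \max_i x_i \lesssim \log n$ by integrating the tail. Putting these pieces together yields $\gamma_1(B_1, d_\infty) \lesssim \log n$, as desired. There is no real obstacle here; the bound is essentially immediate once one invokes Theorem~\ref{thm:mm-for-exp} and observes the $\ell_1$–$\ell_\infty$ duality.
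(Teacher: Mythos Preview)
Your proposal is correct and follows essentially the same approach as the paper: invoke \Theorem{mm-for-exp} to bound $\gamma_1(B_1,d_\infty)$ by $\E\sup_{t\in B_1}\inprod{t,x}$ for $x$ i.i.d.\ standard exponential, use $\ell_1$--$\ell_\infty$ duality to rewrite this as $\E\|x\|_\infty$, and conclude with the standard fact $\E\|x\|_\infty \simeq \log n$. Your write-up is slightly more explicit (noting nonnegativity of the $\gamma$-functionals and sketching the tail integration for $\E\max_i x_i$), but the argument is the same.
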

\begin{proof}
	By \Theorem{mm-for-exp}, we have
	\begin{equation}
		\gamma_1(B_1, d_\infty) \lesssim \E_x \sup_{t\in B_1} \inprod{t, x}
		\label{}
	\end{equation}
	where $x$ is a vector of independent standard exponentially distributed random variables. Again $\sup_{t\in B_1} \inprod{t, x} = \|x\|_\infty$. It is standard fact that $\E \|x\|_\infty \simeq \log n$. 
\end{proof}
Those two facts together with previous discussion yield an upper bound $M \lesssim \sqrt{m \theta \log n} + \log n$.

Moreover, as $d_2(u, v) \leq d_1(u,v)$ for any $u, v \in \R^n$, where $d_1$ is metric induced by $\ell_1$ norm, we can easily upper bound diameter of $B_1$ in $d_2$ by diameter of $B_1$ and $d_1$ and therefore obtain an upper bound for $D_1$
\begin{equation*}
	D_1 = C_1 \Delta(B_1, \sqrt{e m \theta} d_2) = C_1 \sqrt{e m \theta} \Delta(B_1, d_2) \leq C_1 2 \sqrt{e m \theta}
\end{equation*}
and similarly $\Delta(B_1, d_\infty) = 2$. Altogether, we have following inequalities
\begin{align*}
    M & \lesssim \sqrt{m \theta \log n} + \log n \\
    D_1 & \lesssim \sqrt{m \theta} \\
    D_2 & \lesssim 1
    \label{}
\end{align*}
Plugging this back to \Equation{tail-bound-orig}, we have 
\begin{equation}
	\P\left(sup_{v\in B_1} |\tilde{X}_v| < L_2 \left(\sqrt{m \theta (\log n + \log \frac{1}{\delta})} + \log n + \log\frac{1}{\delta}\right)\right) < \delta
	\EquationName{prev-tail-bound}
\end{equation}
where again $L_2$ is some constant. 

The following inequalities are equivalent:
\begin{align*}
	L_2 \sqrt{m \theta \log \frac{n}{\delta}} & \leq \frac{1}{2} \varepsilon \mu_{min} \\
	L_2 \sqrt{m \theta \log \frac{n}{\delta}} & \leq \frac{1}{2} \varepsilon \sqrt{\frac{\theta}{n}} m \\
	\frac{4L_2^2}{\varepsilon^2} n \log \frac{n}{\delta} & \leq m
\end{align*}

Similarly, assumption $\theta \geq \frac{1}{n}$, implies that if $m > \frac{2 L_2}{\varepsilon} n \log \frac{n}{\delta}$, then also $L_2 \log \frac{n}{\delta} \leq \frac{1}{2} \varepsilon \mu_{min}$, so once $m$ is larger than both those values, \Equation{prev-tail-bound} implies
\begin{equation}
	\P\left( \sup_{v \in B_1} |\tilde{X}_v| > \varepsilon \mu_{min}\right) < \delta
	\label{}
\end{equation}
as desired.

\section*{Acknowledgments}
We thank John Wright for pointing out to us the recent independent work \cite{Adamczak16}.

\bibliography{biblio}{}
\bibliographystyle{alpha}

 \newpage

 \appendix

 \section*{Appendix}

\section{Lower bounds for sample complexity of \erspuddc algorithm in the Bernoulli-Rademacher model}\SectionName{bernoulli-rademacher-lower-bound}
In this section we prove that the modification introduced in this paper to \erspuddc algorithm is necessary in order to guarantee the correctness for arbitrary Bernoulli-subgaussian $X$ with strictly subquadratic number of samples $p$. More concretely, we prove that if $X$ follows the Bernoulli-Rademacher model, and $p = \Oh(n^{2 - \varepsilon}\log n)$, then the \erspud algorithm actually fails to recover $A$ and $X$ with probability at least $1 - \tilde{\Oh}(\frac{1}{n^{\varepsilon}})$. The proof of this theorem relies on few technical lemmas, they are presented later in this section.
\begin{theorem}\TheoremName{main-lb}
    For every constant $C$ and $\varepsilon \leq 1$, there exist $C'$, such that for sufficiently large $n$ if $2 n \leq p \leq C n^{2 - \varepsilon} \log n$ and $X \in \R^{n\times p}$ follows the Bernoulli-Rademacher model with sparsity parameter $\theta := C' \frac{\log n}{n}$, then the \erspuddc algorithm fails to recover $X$ with probability at least $1 - \Oh(\frac{\log^5 n}{n^\varepsilon})$.
\end{theorem}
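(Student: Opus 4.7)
The plan is to show that, under the Bernoulli--Rademacher model with $\theta = C' \log n / n$, the pairs $(j_1, j_2)$ formed in \erspuddc's random matching which can possibly cause the LP output $w^T Y$ to equal a scaled copy of any single row $X_{i,*}$ are scarce enough that whp they cover fewer than $n$ distinct rows, so at least one row of $X$ is missed. Since \greedy requires every one of the $n$ rows of $X$ to appear (up to scaling) among the LP outputs $s_j$, this shortfall suffices for algorithmic failure.

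The main technical step I would carry out is a structural lemma: in the Bernoulli--Rademacher model, the secondary LP $\min_z \|z^T X\|_1$ subject to $b^T z = 1$ with $b = X e_{j_1} + X e_{j_2}$ admits $z = c e_i$ (necessary for $w^T Y$ to be a scaling of $X_{i,*}$) as an optimum only when the pair is \emph{useful for $i$}: $|b_i| = 2$ and $|b_\ell| \le 1$ for every $\ell \ne i$, up to rare exceptions. The key computation is the closed-form identity $\E \|\alpha X_{i,*} + \beta X_{\ell,*}\|_1 = p\theta(|\alpha|+|\beta|) - p\theta^2 \min(|\alpha|,|\beta|)$, which combined with Bernstein concentration shows that whenever some $\ell \neq i$ has $|b_\ell| \geq |b_i|$, a suitable two-sparse $z' = \alpha e_i + \beta e_\ell$ beats $z = c e_i$ by a margin of $\Theta(p\theta^2)$. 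The only escape is $|b_i| = 1$ with every other entry of $b$ equal to zero (i.e.\ $b = \pm e_i$); such essentially-one-sparse $b$ has probability $\Oh(\log n / n^{2C'})$ per pair, which after a union bound is negligible once $C'$ is chosen sufficiently large.

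Given the structural lemma, the probabilistic count is immediate. For a single pair, $\Pr(\exists i: |b_i| = 2) \le n\theta^2/2 = \Oh(\log^2 n / n)$, so the expected number $N$ of useful pairs among the $p/2$ pairs is at most $pn\theta^2/4 = \Oh(n^{1-\varepsilon}\log^3 n)$. Since each useful pair covers at most one index $i$ (its unique coordinate with $|b_i|=2$), Markov's inequality yields $\Pr(\text{all } n \text{ rows covered}) \le \Pr(N \ge n) \le \E[N]/n = \Oh(\log^3 n / n^\varepsilon)$, comfortably inside the target $1 - \Oh(\log^5 n / n^\varepsilon)$ failure bound.

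The main obstacle I expect is the structural lemma itself: the two-sparse comparison above handles all pairwise alternatives, but one must also verify that no $k$-sparse alternative with $k \ge 3$ beats $z = c e_i$ when the pair is not useful for $i$. My approach is to note that the leading-order value $p\theta \sum_j |\alpha_j|$ of $\E\|\sum_j \alpha_j X_{i_j,*}\|_1$ is minimized subject to $\sum_j b_{i_j}\alpha_j = 1$ at a one-sparse solution supported at an $\arg\max_\ell |b_\ell|$, so the $\Theta(p\theta^2)$ lower-order corrections (driven by rare two-row support coincidences in Bernoulli--Rademacher $X$) decide the tie and always favor spreading mass whenever $b$ has multiple coordinates achieving the maximum, matching the usefulness condition. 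These corrections are controlled uniformly over relevant alternative supports via Bernstein concentration, exploiting that $\supp(b)$ has size $\Oh(\log n)$ whp so that only polynomially many candidate supports for $z'$ need be union-bounded.
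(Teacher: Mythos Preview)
Your structural lemma is where the argument breaks. In the regime of the theorem you have $p \le C n^{2-\varepsilon}\log n$ and $\theta = C'\log n/n$, so
\[
p\theta^2 \;\le\; C C'^2\, n^{-\varepsilon}\log^3 n \;=\; o(1),
\]
while the fluctuations of each of $\|c e_i^T X\|_1$ and $\|(z')^T X\|_1$ are of order $\sqrt{p\theta} \ge \sqrt{\log n}$. So the ``margin $\Theta(p\theta^2)$'' you compute in expectation is swamped by noise; Bernstein cannot turn it into a high-probability inequality. Concretely, take $|b_i| = |b_\ell| = 1$ (say $b_i = b_\ell = 1$) and compare $z = e_i$ to $z' = \tfrac12(e_i+e_\ell)$. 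A direct count gives
\[
\|e_i^T X\|_1 - \|(z')^T X\|_1 \;=\; \tfrac12\bigl(|T_i| - |T_\ell|\bigr) \;+\; \bigl|\{k: X_{i,k} = -X_{\ell,k}\neq 0\}\bigr|,
\]
where $T_m = \supp(X_{m,*})$. The second term has expectation $p\theta^2/2 = o(1)$ and is therefore zero whp, while the first term has mean zero and standard deviation $\Theta(\sqrt{p\theta})$. So for roughly half of all such $(i,\ell)$, the two-sparse $z'$ does \emph{not} beat $e_i$.

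Worse, the escape is not just $b = \pm e_i$. When $\|b\|_\infty = 1$ (which, since $\Pr(|b_m|=2)\le \theta^2$ per coordinate, is the typical case for a pair), the row supports $\{T_m : m\in\supp(b)\}$ are pairwise disjoint whp (again because $p\theta^2 = o(1)$), and then for $z$ supported on $J=\supp(b)$ one has $\|z^T X\|_1 = \sum_{m\in J}|z_m|\,|T_m|$, which subject to $b^T z = 1$ is minimized exactly at $z = \pm e_{m_*}$ for $m_* = \arg\min_{m\in J}|T_m|$. Combined with the support-containment condition {\bf(P1')} (which the paper shows holds whp already for $p\gtrsim n\log n$), this $e_{m_*}$ is the \emph{global} LP optimum. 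So essentially every pair outputs a genuine row of $X$; your count of ``useful pairs'' collapses.

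The paper's proof avoids this by not counting useful pairs at all. It fixes the single row $j_*$ with largest support and shows no pair can return it: if some $|b_k|=2$ with $k\neq j_*$ then $\tfrac12 e_k$ beats $e_{j_*}$ because $\tfrac12|T_k| \le \tfrac12|T_{j_*}| < |T_{j_*}|$; otherwise $\|b\|_\infty = 1$ and $|\supp(b)| \ge K\log n$ exceeds the number of rows tied for largest support, so some $k\in\supp(b)$ has $|T_k| < |T_{j_*}|$ and $\pm e_k$ beats $e_{j_*}$. The comparisons are between one-sparse candidates only and use that $\|e_m^T X\|_1 = |T_m|$ in the Rademacher case, so no $p\theta^2$-scale analysis is needed.
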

\begin{proof}
    We shall first prove that once following events happens simultaneously, the \erspuddc algorithm fails to recover $X$. Later on we will prove that each of those events fails with probability at most $\Oh(\frac{\log^5 n}{n^\varepsilon})$ --- that will be enough to conclude the statement of the theorem. 

    In what follows, let $X_i$ be the $i$-th column of $X$, $j_* \in [n]$ be the index of the row of $X$ with largest number of non-zero entries (for concreteness, the smallest such index), and $K$ be some universal constant the same for fourth and fifth event, and will be specified later. Consider the following events
     \begin{enumerate}
         \item Matrix $X$ is of full rank.
         \item For every $i \in [p-1]$ it holds that $|\supp(X_i) \cap \supp(X_{i+1})| < 2$.
         \item For every $i \in [p-1]$ it holds that $j_* \not\in \left(\supp(X_i) \cap \supp(X_{i+1})\right)$.
         \item Every column of $X$ has at least $K \log n$ nonzero entries.
         \item The number of rows of $X$ with largest support size is smaller than $K \log n$. 
     \end{enumerate}

    Let us call those events $\mathcal{E}_1, \ldots, \mathcal{E}_5$ respectively. We claim that under $\mathcal{E}_1, \ldots, \mathcal{E}_5$, the $j_*$-th row of $X$ would not be recovered by the \erspuddc algorithm. Indeed, assume for the proof by contradiction, that solving the optimization problem $\min_{w} \|w^T Y\|_1$ subject to $r_*^T w = 1$, yields a solution such that $w^T Y$ is proportional to the $j_*$-th row of $X$, for some $r_*$ which is sum of two consecutive columns of $Y$. By condition $\mathcal{E}_1$, it means that the solution to the equivalent problem of $\min_{z} \|z^T X\|_1$ subject to $b_*^T z = 1$ is $z = \pm e_{j_*}$, where $b_*$ is a sum of corresponding columns of $X$.

    Observe that, because the matrix $X$ has entries in $\{-1, 0, 1\}$, the $\ell_1$ norm and sparsity of each row is equal, that is for every $k$ we have $\| e_k^T X \|_1 = \|e_k^T X\|_0$.

     Now, by condition $\mathcal{E}_2$, at most one coordinate of $b_*$ is of absolute value $2$, and all other are either $\pm 1$ or $0$. Moreover, if the condition $\mathcal{E}_3$ holds, the entry with absolute value $2$ is not $j_*$-th. If for some $k$, we had $(b_*)_k = 2$, then taking $z := \frac{1}{2} e_k$, would yield a feasible solution to the optimization problem mentioned above, and one with smaller value of objective function --- $\frac{1}{2}\|e_k^T X\|_0$ as opposed to $\|e_{j_*}^T X\|_0$; similarly for $(b_*)_k = - 2$. Therefore, all non-zero entries of $b_*$ must have absolute value $1$.
     
 Moreover $b_*$ has support of size at least $K \log n$ (as $\mathcal{E}_4$ holds and $b_*$ is a sum of two columns of $X$ with almost disjoint support by $\mathcal{E}_2$), hence $|\supp b_*|$ is strictly larger than the number of rows of $X$ with largest support --- this number of rows is less than $K \log n$ by $\mathcal{E}_5$. In particular, there is some $k \in \supp(b_*)$, such that the $k$-th row of $X$ has strictly smaller support size than $j_*$-th row. Again, if this is the case $z = e_{j_*}$ is not a solution to the optimization problem $\min_{z} \|z^T X\|_1$ subject to $b_*^T z = 1$ --- as $z := e_k$ (or $-e_k$) is feasible and with strictly smaller objective value.
     From this contradiction we conclude, that once all the events in preceding list hold simultaneously, the \erspuddc algorithm fails in recovering $X$, and therefore fails to recover the hidden decomposition.

     It is now enough to show that each of those events $\mathcal{E}_1 \ldots \mathcal{E}_5$ fails with probability at most $\Oh(\frac{\log^5 n}{n^\varepsilon})$.

     Event $\mathcal{E}_1$ fails with probability at most $n(1 - \frac{C'\log n}{n})^n \lesssim \frac{1}{n^{C' - 1}}$ by \Lemma{X-full-rank} and assumption that $p > 2n$. For $C' \geq 1$ and large enough $n$ this quantity is smaller than $\frac{\log^5 n}{n^\varepsilon}$.

     For event $\mathcal{E}_2$, it holds with probability $1 - \Oh(\frac{\log^5}{n^\varepsilon})$ simply by union bound --- for every fixed $i$, we have $\P(\supp(X_i) \cap \supp(X_{i+1}) \geq 2) \leq \binom{n}{2} \theta^4 \lesssim \frac{\log^4 n}{n^2}$ --- and we need a union bound over $p \leq C n^{2 - \varepsilon} \log n$ such events.

     To bound the probability of event $\mathcal{E}_3$, let random set $S \subset [p]$ be the support of $j_*$-th row of $X$. In what follows we will condition implicitly on $S \not= \emptyset$ as $S$ is empty with exponentially small probability.
     
     Expected support size of any single row is $p\theta > 2 \log n$, therefore by Chernoff and union bound, we deduce that except with probability smaller than $\frac{1}{n}$ all rows of $X$ has support size smaller than $C_2 p\theta$ for some universal constant $C_2$. Conditioned on $|S| < C_2 p \theta$, the distribution of $S$ is invariant under permutations of $[n]$, in a sense that for fixed set $S_0\subset [n]$ probability $\P(S = S_0 | |S| < C_2 p \theta)$ depends only on the size of $S_0$. In such a case, and because of additional conditioning on $S$ being nonempty, by \Lemma{S-negative-correlation} for every $i \not= j$ we have 
     \begin{equation*}
         \P(i \in S | j \in S \land |S| < C_2 p \theta) \leq \P(i \in S | |S| < C_2 p \theta)
     \end{equation*}

     In particular, for fixed $i \in [p-1]$, we have 
     \begin{align*}
         \P\left(i \in S \, \land\, (i+1) \in S \,|\, |S| < C_2 p \theta \right) & \leq \P\left(i \in S | |S| < C_2 p \theta\right) \P\left(i+1 \in S | |S| < C_2 p \theta \right)  \\
         & \leq \frac{C_2^2 p^2 \theta^2}{p^2} \\
         & \lesssim \frac{\log^2 n}{n^2}
     \end{align*}
     Hence, by union bound over all $i \in [p-1]$, it follows that 
     \begin{equation*}
         \P(\lnot \mathcal{E}_3) \leq \P(|S| \geq C_2 p \theta) + \P(\lnot \mathcal{E}_3 | |S| < C_2 p \theta) \lesssim \frac{\log^3 n}{n^{\varepsilon}}
     \end{equation*}

     For $\mathcal{E}_4$, we know that the expected number of non-zero entries in a column of $X$ is $\theta n = C' \log n$ --- by the Chernoff bound, probability that any such column has sparsity smaller than $K \log n$ is much smaller than $\frac{1}{n^4}$ if we set $C'$ large enough depending on $K$. Therefore by union bound, they all have sparsity at least $K \log n$ simultaneously with probability at least $1 - \frac{1}{n^2}$. 

     In order to bound probability of failure for the event $\mathcal{E}_5$, let $s_i\in \N$ for $i \in [n]$ be the size of the support of the $i$-th row of $X$. Clearly $s_i$ are Binomial random variable with parameters $(p, \theta)$.  Take $\gamma := \frac{K_1 \log n}{n}$ (with some constant $K_1$ that will be specified later) and let $T_0 \in [p]$ be the largest number such that $\P(s_i \geq T_0) \geq \gamma$. We want to apply \Lemma{no-large-gaps} for all random variables $s_i$. Observe that in this setting $\E s_i = p\theta \ll \frac{p}{8}$, and on the other hand $\E s_i \geq 2 \log n$.
     
     Moreover, observe that $T_0 \leq 4 \E s_i$ --- it is enough to show that $\P(s_i \geq 4 \E s_i) \leq \gamma$, and this fact follows from Chernoff bound if $K_1$ is large enough constant. Therefore, we can apply \Lemma{no-large-gaps} to conclude that 
     \begin{equation*}
         \P(S_i \geq T_0) \leq K_2 \gamma \max(1, \frac{T_0}{\E S_i}) \leq K_3 \gamma
         \label{}
     \end{equation*}
     where $K_2$ and $K_3$ are some universal constants.

     Now we want to show that with probability at least $1 - \frac{1}{n}$, number of $s_i$ that are not smaller than $T_0$, is between $1$  and  $4 K_3 \gamma n = 4 K_3 K_1 \log n =: K_4 \log n$. If we consider indicator random variables $M_i \in \{0, 1\}$, such that $M_i = 1$ if and only if $S_i \geq T_0$, by previous discussion we know that $\gamma \leq \P(M_i = 0) \leq K_3 \gamma$, and all $M_i$ are independent. We can now apply the Chernoff bound to bound the probability of $\P(\sum M_i < 1)$ and $\P(\sum M_i \geq 4 K_3 \gamma n)$ --- again, if $K_1$ is large enough, each of those is much smaller than $\frac{1}{2n}$ --- we can now fix constant $K_1$ large enough so that all three Chernoff bounds yield desired inequalities.

     Finally, if the number of rows with support larger than $T_0$ is between one and $K_4 \log n$, then clearly at most $K_4 \log n$ has the largest support --- and therefore the event $\mathcal{E}_5$ holds with $K := K_4$.
\end{proof}

We now prove certain technical lemmas that were used in the proof of \Theorem{main-lb}.

\begin{lemma}
    \LemmaName{no-large-gaps}
    For $\theta \in (0,1)$ and $p\in \N$, let $Q$ be a Binomial random variable with parameters $(p, \theta)$ and let $\gamma \in (0,1)$ be some fixed threshold. Moreover, let $T_0$ be the largest natural number such that
    \begin{equation*}
        \P(Q \geq T_0) \geq \gamma
    \end{equation*}
    and assume that $T_0 \leq \frac{p}{2}$. Then
    \begin{equation*}
        \P(Q \geq T_0) \leq K \gamma \max(1, \frac{T_0}{\E Q})
    \end{equation*}
    for some universal constant $K$.
\end{lemma}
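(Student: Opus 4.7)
The plan is to exploit just two facts: the definition of $T_0$ (which gives $\P(Q \ge T_0 + 1) < \gamma$, hence in particular $\P(Q = T_0 + 1) < \gamma$) and the explicit ratio between consecutive Binomial probabilities. Writing $\P(Q \ge T_0) = \P(Q = T_0) + \P(Q \ge T_0 + 1)$, the second summand is already below $\gamma$ for free, so the whole argument reduces to bounding $\P(Q = T_0)$.

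For this I would use the standard identity
\[
\frac{\P(Q = T_0)}{\P(Q = T_0+1)} = \frac{(T_0 + 1)(1-\theta)}{(p - T_0)\theta}.
\]
This is exactly where the hypothesis $T_0 \le p/2$ enters: it forces $p - T_0 \ge p/2$, and hence $(p-T_0)\theta \ge \E Q / 2$. Combined with $\P(Q = T_0+1) < \gamma$, this yields $\P(Q = T_0) \le 2\gamma(T_0+1)/\E Q$, so that $\P(Q \ge T_0) \le \gamma \bigl(1 + 2(T_0+1)/\E Q\bigr)$.

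To finish, I would do a trivial split on the sign of $T_0 - \E Q$. Assuming $T_0 \ge 1$ so that $T_0 + 1 \le 2T_0$: if $T_0 \ge \E Q$, then $1 \le T_0/\E Q$ and thus $1 + 2(T_0+1)/\E Q \le 5\,T_0/\E Q$; otherwise $1 + 2(T_0+1)/\E Q \le 5$. Either way $\P(Q \ge T_0) \le 5\gamma \max(1, T_0/\E Q)$, proving the lemma with $K = 5$.

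I do not anticipate a real obstacle. The only subtlety is that the hypothesis $T_0 \le p/2$ is exactly what is needed to make the reciprocal Binomial ratio $(T_0+1)(1-\theta)/((p-T_0)\theta)$ controllable by $\Oh(T_0/\E Q)$; without it the factor would blow up as $T_0$ approaches $p$. Note that no concentration inequality is needed: the two-term decomposition of $\P(Q \ge T_0)$ together with the explicit Binomial pmf ratio already suffices.
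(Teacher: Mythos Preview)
Your proposal is correct and follows essentially the same route as the paper: the same two-term decomposition $\P(Q\ge T_0)=\P(Q=T_0)+\P(Q\ge T_0+1)$, the same explicit Binomial pmf ratio $\frac{(T_0+1)(1-\theta)}{(p-T_0)\theta}$, and the same use of $T_0\le p/2$ to control this ratio by $\Oh(T_0/\E Q)$. The only cosmetic difference is that the paper upper-bounds $\P(Q=T_0+1)$ by $\P(Q\ge T_0+1)$ before invoking $<\gamma$, whereas you invoke $\P(Q=T_0+1)<\gamma$ directly; your explicit case split and constant $K=5$ are also just a more detailed rendering of what the paper leaves implicit.
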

\begin{proof}
    We start with bounding the ratio
    \begin{align*}
        \frac{\P(Q = T_0)}{\P(Q = T_0 + 1)} & = \frac{ \binom{p}{T_0} \theta^{T_0} (1-\theta)^{p - T_0}}{\binom{p}{T_0 + 1} \theta^{T_0 + 1} (1-\theta)^{p - T_0 - 1}} \\
        & = \frac{T_0 + 1}{p - T_0} \cdot \frac{1 - \theta}{\theta} \\
        & \leq K_1 \frac{T_0}{p \theta} \\
        & = K_1 \frac{T_0}{\E Q}
    \end{align*}

    We can rephrase it as $\P(Q = T_0) \leq K_1 \frac{T_0}{\E Q} \P(Q = T_0 + 1)$, and clearly $\P(Q = T_0 + 1) \leq \P(Q \geq T_0 + 1)$. Therefore

    \begin{equation}
        \P(Q = T_0) \leq K_1 \frac{T_0}{\E Q} \P(Q \geq T_0 + 1)
        \label{}
    \end{equation}

    We can now directly bound the desired probability as follows
    \begin{align*}
        \P(Q \geq T_0) & = \P(Q = T_0) + \P(Q \geq T_0 + 1) \\
        & \leq \left(K_1 \frac{T_0}{\E Q} + 1 \right) \P( Q \geq T_0 + 1) \\
        & \leq \left(K_1 \frac{T_0}{\E Q} + 1 \right) \gamma 
        \label{}
    \end{align*}
    Where the last inequality follows from the assumption that $T_0$ were largest such that $\P(Q \geq T_0) \geq \gamma$.
\end{proof}

\begin{lemma}
    \LemmaName{X-full-rank}
    Let $X\in \R^{n\times p}$ follow the Bernoulli-Rademacher model with sparsity parameter $\theta$ and $p > n$. Then matrix $X$ is of rank $n$ with probability at least $1 - n\left(1 - \theta\right)^{p-n}$.
\end{lemma}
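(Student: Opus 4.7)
The plan is to combine a union bound over the rows of $X$ with a subspace-avoidance bound for a single row. Note that $X$ has rank strictly less than $n$ if and only if some row $X_{i,*}$ lies in the linear span of the others. Since the rows of $X$ are i.i.d., a union bound gives
$$
\Pr(\mathrm{rank}(X) < n) \le n \cdot \Pr\bigl(X_{1,*} \in \mathrm{span}\{X_{2,*},\ldots,X_{n,*}\}\bigr).
$$
Row $1$ is independent of rows $2,\ldots,n$, and the span of the latter is a (random) subspace $V \subseteq \R^p$ of dimension at most $n-1$. Thus it suffices to prove the following auxiliary claim: for every deterministic subspace $V \subseteq \R^p$ of dimension $d$, $\Pr(X_{1,*} \in V) \le (1-\theta)^{p-d}$. (Strictly speaking this needs $\theta \le 2/3$, which is harmless in the paper's regime $\theta = O((\log n)/n)$.)

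To prove the claim, I would pick a basis $w_1,\ldots,w_{p-d}$ of $V^\perp$ in reduced row-echelon form: there exist pivot coordinates $j_1 < \ldots < j_{p-d}$ such that $(w_k)_{j_\ell} = \delta_{k\ell}$. Writing $F = \{j_1,\ldots,j_{p-d}\}$, the event $\{X_{1,*} \in V\}$ is equivalent to the system
$$
X_{1,j_k} = -\sum_{j \notin F} (w_k)_j \, X_{1,j}, \qquad k=1,\ldots,p-d.
$$
Condition on the non-pivot entries $\{X_{1,j} : j \notin F\}$: each pivot entry $X_{1,j_k}$ is then forced to equal one specific real number. Since $X_{1,j_k}$ takes values $-1,0,+1$ with probabilities $\theta/2,\,1-\theta,\,\theta/2$, the probability it matches any prescribed value is at most $\max(1-\theta, \theta/2) = 1-\theta$ (using $\theta \le 2/3$). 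By independence across the $p-d$ pivot entries, the conditional probability is at most $(1-\theta)^{p-d}$, and averaging over the non-pivots preserves the bound.

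Applying the claim to $V = \mathrm{span}\{X_{2,*},\ldots,X_{n,*}\}$, which has dimension at most $n-1$ and is independent of $X_{1,*}$, yields $\Pr(X_{1,*}\in V \mid V) \le (1-\theta)^{p-n+1} \le (1-\theta)^{p-n}$, and combining with the row-level union bound gives $\Pr(\mathrm{rank}(X) < n) \le n(1-\theta)^{p-n}$, as desired. The only delicate step is the inequality $\max(1-\theta,\theta/2) \le 1-\theta$; without the assumption $\theta \le 2/3$ the same argument still produces $n \cdot \max(1-\theta,\theta/2)^{p-n}$, which is what actually matters in the application.
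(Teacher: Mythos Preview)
Your proof is correct and follows essentially the same approach as the paper: union bound over rows, then for a single row bound the probability it lies in a low-dimensional subspace by picking a convenient basis of the orthogonal complement and exploiting that each entry of a Bernoulli--Rademacher vector hits any prescribed value with probability at most $1-\theta$. The only cosmetic difference is that the paper uses a triangular basis obtained by Gaussian elimination together with the chain rule, while you use reduced row-echelon form and condition on all non-pivot coordinates at once; your caveat that $\theta \le 2/3$ is needed applies equally to the paper's argument.
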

\begin{proof}
    Let $W_i \subset \mathbb{R}^p$ be a subspace of $\mathbb{R}^p$ spanned by first $i-1$ rows of $X$. We wish to prove that $i$-th row of $X$ lies in $W_i$ with probability at most $\left(1 - \theta\right)^{p - n}$ --- if we do this, the claim will follow by the union bound. Fix some $i$, and let $v$ be the $i$-th row of $X$; moreover, let $W^{\bot}$ be the orthogonal complement of $W_i$. Clearly $\dim W^{\bot} \geq p - n$.  We will show that for any fixed $W^{\bot}$
    \begin{equation*}
        \P(v \bot W^{\bot}) \leq (1-\theta)^{\dim W^{\bot}}
    \end{equation*}

    Indeed, let $q := \dim W^{\bot}$, and consider sequence of indices $i_1, \ldots i_q$ together with a basis $u_1, \ldots u_q$ of $W^{\bot}$ such that for every $r$ we have $(u_r)_{i_r} \not= 0$, and for every pair $s < r$ we have $(u_s)_{i_r} = 0$ --- such a basis and sequence of indices exists by Gaussian elimination. Now, by the chain rule, we have
    \begin{align}
        \P\left(v \bot W^{\bot}\right) & = \prod_{r=1}^q \P\left(\langle v, u_r \rangle = 0 | \forall_{s < r} \langle v, u_s \rangle = 0\right) \EquationName{v-on-W}
    \end{align}
    Let us fix some $r$ now. We want to show that $\P\left(\langle v, u_r \rangle = 0 | \forall_{s < r} \langle v, u_r \rangle = 0\right) < (1-\theta)$. Observe that the event $\forall_{s < r} \langle v, u_s \rangle = 0$ is independent of $v_{i_r}$; moreover, if we fix all values of $v_j$ for $j\not= i_r$, probability of $\langle v, u_r \rangle = 0$ is at most $(1-\theta)$ --- there is at most one value of $v_{i_r}$ that would make this inner product equal zero, and $v_{i_r}$ assumes every value with probability at most $(1-\theta)$. Therefore
    \begin{align*}
        \P\left(\langle v, u_r \rangle = 0 | \forall_{s < r} \langle v, u_r \rangle = 0\right) & = \E\left(\P\left(\langle v, u_r \rangle = 0 | v_1, \ldots \hat{v_{i_r}} \ldots v_p\right) | \forall_{s < r} \langle v, u_r\rangle = 0\right) \\
        & \leq \E \left( 1-\theta | \forall_{s < r} \langle v, u_r\rangle = 0\right) \\
        & = 1 - \theta
    \end{align*}

    Where $v_1, \ldots \hat{v_{i_r}} \ldots v_d$ denotes omitting the $i_r$-th index in this sequence.

    We can plug this back to \Equation{v-on-W} to conclude that $\P\left(v \bot W^{\bot}\right) \leq (1-\theta)^{\dim W^{\bot}}$ and the statement of the lemma follows.
\end{proof}

\begin{lemma}
    Let $S \subset [n]$ be a random set, with permutationally invariant distribution, i.e. such that for fixed $S_0$, $\P(S = S_0)$ depends only on the size of $S_0$. Assume moreover, that $S$ is nonempty almost surely. Then for any $i \neq j \in [n]$, we have $\P(i \in S | j \in S) \leq \P(i \in S)$. \LemmaName{S-negative-correlation}
\end{lemma}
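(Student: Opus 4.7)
The plan is to use permutation invariance to rewrite both sides in terms of moments of $|S|$, and then verify the resulting moment inequality directly. By permutation invariance, $\P(i \in S)$ is the same for every $i$, so summing $\sum_{i \in [n]} \mathbf{1}_{i \in S} = |S|$ and taking expectations gives $\P(i \in S) = \E|S|/n$. Analogously, $\sum_{i \neq j} \mathbf{1}_{i \in S}\mathbf{1}_{j \in S} = |S|(|S|-1)$ yields $\P(\{i,j\} \subset S) = \E[|S|(|S|-1)]/[n(n-1)]$. The nonemptiness of $S$ guarantees $\P(j \in S) = \E|S|/n > 0$, so the conditional probability in the lemma is well-defined; dividing, the claim is equivalent to the moment inequality
\[
n\,\E[|S|(|S|-1)] \;\leq\; (n-1)(\E|S|)^2,
\]
equivalently $n\,\Var(|S|) \leq \E|S|\,(n-\E|S|)$.

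For the second step, I would start with the atomic case: when $|S| = k$ almost surely, $\P(i \in S \mid j \in S) = (k-1)/(n-1)$ and $\P(i \in S) = k/n$, and the inequality reduces to $k \leq n$, which is trivial. A size-biasing reformulation of the general case writes $\P(i \in S \mid j \in S) = (\E|S|^2/\E|S| - 1)/(n-1)$, since conditioning on $j \in S$ produces the size-biased distribution $\tilde p_k = k p_k/\E|S|$ on $|S|$. The inequality then becomes a comparison of $\E|S|^2/\E|S|$ against $1 + (n-1)\E|S|/n$, which is amenable to a Chebyshev-type sum inequality or convexity argument exploiting the monotonicity of $k \mapsto k/\E|S|$ together with the almost-sure bound $|S| \leq n$.

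I expect the main obstacle to be this mixing step: the slack in the atomic inequality is only of order $k/n$ per fixed $k$, so a convex combination over the distribution of $|S|$ has to be controlled carefully. The nonemptiness hypothesis and the bound $|S| \leq n$ must enter essentially, as they prevent the distribution on $|S|$ from concentrating at extremes in a way that defeats the slack. If a fully general proof proves delicate, the paper's application can be recovered by an alternative route: there $S$ is the support of a vector of i.i.d.\ Bernoullis conditioned on $|S| \leq C_2 p\theta$, and the inequality then follows from the Harris/FKG inequality applied to the increasing event $\{\{i,j\} \subset S\}$ against the decreasing event $\{|S| \leq C_2 p\theta\}$, combined with the moment reformulation above.
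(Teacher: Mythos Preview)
Your moment reformulation is correct and your size-biasing computation is the right one, but you stop short of proving the moment inequality --- and in fact you cannot, because the lemma as stated is \emph{false}. Take $n\ge 3$ and let $S$ be a uniformly random singleton with probability $1/2$ and $S=[n]$ with probability $1/2$. This distribution is permutation-invariant and $S$ is nonempty almost surely. Then
\[
\P(i\in S)=\frac{1}{2n}+\frac{1}{2}=\frac{n+1}{2n},\qquad
\P(i\in S\mid j\in S)=\frac{1/2}{(n+1)/(2n)}=\frac{n}{n+1},
\]
and $\frac{n}{n+1}>\frac{n+1}{2n}$ for $n\ge 3$. Equivalently, your moment inequality $n\,\Var(|S|)\le \E|S|\,(n-\E|S|)$ fails here: with $|S|\in\{1,n\}$ each with probability $1/2$, $\Var(|S|)=(n-1)^2/4$ while $\E|S|(n-\E|S|)=(n^2-1)/4$. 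So no Chebyshev/convexity argument will close this gap.

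The paper's own proof makes a genuine error that masks this: it writes $\P(i\in S\mid j\in S)=\sum_k p_k\frac{k-1}{n-1}$, i.e.\ it uses the unconditional weights $p_k$ for $|S|$ instead of the size-biased weights $kp_k/\E|S|$ that you correctly identified. The termwise comparison $\frac{k-1}{n-1}\le\frac{k}{n}$ is fine but is being applied to the wrong expression. For the actual application in \Theorem{main-lb}, however, the lemma is not needed at all: conditioned on $|S|=k$ the permutation invariance gives $\P(\{i,i{+}1\}\subset S)=\frac{k(k-1)}{p(p-1)}$, so conditioning on $|S|<C_2 p\theta$ directly yields $\P(\{i,i{+}1\}\subset S)\le (C_2\theta)^2\lesssim (\log n/n)^2$, which is exactly the bound used downstream. (Your FKG fallback is also not quite applicable as written: in the paper $S$ is the support of the row with the \emph{largest} support, not of a single i.i.d.\ Bernoulli vector.)
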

\begin{proof}
    For $k \in \{0, \ldots n\}$, let $p_k := \P(|S| = k)$. Observe that for fixed $i\in S$
    \begin{equation}
        \P(i \in S) = \sum_{k=1}^n p_k \P(i \in S | |S| = k) = \sum_{k=1}^n p_k \frac{k}{n}
        \label{ }
    \end{equation}

    On the other hand
    \begin{align*}
        \P(i \in S | j \in S) & = \frac{1}{1 - p_0} \left( \sum_{k=1}^n p_k \P(i\in S | j\in S, |S| = k) \right) \\
                              & = \frac{1}{1 - p_0} \left( \sum_{k=1}^n p_k \frac{k-1}{n-1} \right) \\
                              & = \sum_{k=1}^n p_k \frac{k-1}{n-1}
        \label{}
    \end{align*}
    where the last equality follows from the assumption $\P(S = \emptyset) = 0$. 

   Then the statement of the lemma follows by explicitly comparing two expressions for $\P(i \in S)$ and $\P(i \in S | j \in S)$, and using inequality $\frac{k-1}{n-1} \leq \frac{k}{n}$. 
\end{proof}

\end{document}